\documentclass{article}





\usepackage{fullpage}
\usepackage[utf8]{inputenc} 
\usepackage[T1]{fontenc}    
\usepackage{hyperref}       
\usepackage{url}            
\usepackage{booktabs}       
\usepackage{amsfonts}       
\usepackage{nicefrac}       
\usepackage{microtype}      

\usepackage{amssymb,amsthm,amsmath}
\usepackage{dsfont}
\usepackage{tikz}
\usetikzlibrary{arrows,shapes,snakes,automata,backgrounds,petri,hobby, patterns}
\usetikzlibrary{decorations.pathreplacing}
\usepackage{pgfplots}
\usepackage{mathtools}
\usepackage{float}
\usepackage{xcolor}
\usepackage{enumitem}
\usepackage{array}
\usepackage{bm}

\newcommand{\R}{\mathbb{R}}
\newcommand{\N}{\mathbb{N}}

\newcommand{\T}{\mathcal{T}}
\renewcommand{\L}{\mathcal{L}}

\newcommand{\TU}{\mathcal{T}_{U,V,u}}

\newcommand{\Relu}{\text{ReLU}}

\newtheorem{thm}{Theorem}[section]
\newtheorem{prop}[thm]{Proposition}

\newtheorem{remark}[thm]{Remark}

\newtheorem{defi}[thm]{Definition}

\newcommand*\circled[1]{\tikz[baseline=(char.base)]{
            \node[shape=circle,draw,inner sep=2pt] (char) {#1};}}

\title{ResNet with one-neuron hidden layers is a Universal Approximator}
%

\author{
  Hongzhou Lin\\
  MIT\\
  Cambridge, MA 02139 \\
  \texttt{hongzhou@mit.edu} \\
  \and
  Stefanie Jegelka\\
  MIT\\
  Cambridge, MA 02139 \\
  \texttt{stefje@csail.mit.edu} \\
}

\begin{document}
\maketitle

\begin{abstract}
We demonstrate that a very deep ResNet with stacked modules with one neuron per hidden layer and ReLU activation functions can uniformly approximate any Lebesgue integrable function in $d$ dimensions, i.e. $\ell_1(\R^d)$. Because of the identity mapping inherent to ResNets, our network has alternating layers of dimension one and $d$. This stands in sharp contrast to fully connected networks, which are not universal approximators if their width is the input dimension $d$ \cite{lu2017expressive,hanin2017approximating}. Hence, our result implies an increase in representational power for narrow deep networks by the ResNet architecture.

\end{abstract}

\section{Introduction}
Deep neural networks are central to many recent successes of machine learning, including applications such as computer vision, natural language processing, or reinforcement learning. A common trend in deep learning has been to construct larger and deeper networks, starting from the pioneer convolutional network LeNet~\cite{lenet}, to networks with tens of layers such as AlexNet~\cite{alexnet} or VGG-Net~\cite{vgg}, or recent architectures like GoogLeNet/Inception~\cite{szegedygoing} or ResNet~\cite{resnet,he2016identity}, which may contain hundreds or thousands of layers. A typical observation is that deeper networks offer better performance. This phenomenon, at least on the training set, supports the intuition that a deeper network should have more capacity to approximate the target function, and leads to a question that has received increasing interest in the theory of deep learning: can all functions that we may care about be approximated well by a sufficiently large and deep network? In this work, we address this important question for the popular ResNet architecture.


The question of representational power of neural networks has been answered in different forms. Results in the late eighties showed that a network with a single hidden layer can approximate any continuous function with compact support to arbitrary accuracy, when the width goes to infinity~\cite{cybenko1989approximation, hornik1989multilayer,funahashi1989approximate, kuurkova1992kolmogorov}. This result is referred to as the \emph{universal approximation theorem}. Analogous to the classical Stone-Weierstrass theorem on polynomials or the convergence theorem on Fourier series, this theorem implies that the family of neural networks are universal approximators: we can apply neural networks to approximate any continuous function and the accuracy improves as we add more neurons in the width. More importantly, the coefficients in the network can be efficiently learned via  back-propagation, providing an explicit representation of the approximation. 

This classical universal approximation theorem completely relies on the power of the width increasing to infinity, i.e., ``fat'' networks. Current ``tall'' deep learning models, however, are not captured by this setting.
Consequently, theoretically analyzing the benefit of depth has gained much attention in the recent literature \cite{szymanski2014deep, cohen2016expressive, eldan2016power, telgarsky2016benefits, mhaskar2016deep, liang2016deep, rolnick2018the}. The main focus of these papers is to provide examples of functions that can be efficiently represented by a deep network but are hard to represent by shallow networks. These examples require exponentially many neurons in a shallow network to achieve the same approximation accuracy as a deep network with only a polynomial or linear number of neurons. Yet, these specific examples do not imply that \emph{all} shallow networks can be represented by deep networks, leading to an important question: 

	\textit{If the number of neurons in each layer is bounded, does universal approximation hold when the depth goes to infinity? }



This question has recently been studied by \cite{lu2017expressive,hanin2017approximating} for fully connected networks with ReLU activation functions: if each hidden layer has at least $d+1$ neurons, where $d$ is the dimension of the input space, the universal approximation theorem holds as the depth goes to infinity. If, however, at most $d$ neurons can be used in each hidden layer, then universal approximation is impossible even with infinite depth.


In practice, other architectures have been developed to improve empirical results. A popular example is ResNet \cite{resnet,he2016identity}, which includes an identity mapping in addition to each layer. A first step towards a better theoretical understanding of those empirically successful models is to ask how the above question extends to them. Do the architecture variations make a difference theoretically? Due to the identity mapping, for ResNet, the width of the network remains the same as the input dimension. For a formal analysis, we stack modules of the form shown in Figure~\ref{fig:residual block}, and analyze how small the hidden green layers can be.
The resulting width of $d$ (blue) or even less (green) stands in sharp contrast with the negative result for width $d$ for fully connected networks in \cite{lu2017expressive,hanin2017approximating}; their constructions do not transfer. Indeed, our empirical illustrations in Section~\ref{sec:example} demonstrate that, empirically, significant differences in the representational power of narrow ResNets versus narrow fully connected networks can be observed. Our theoretical results confirm those observations.

\cite{hardt2016identity} show that ResNet enjoys universal finite-sample expressive power, i.e., ResNet can represent any classifier on any finite sample perfectly. This positive result in the discrete setting motivates our work. Their proof, however, relies on the fact that samples are ``far'' from each other and hence cannot be used in the setting of full functions in continuous space.


\textbf{Contributions.} The main contribution of this paper is to show that ResNet with one single neuron per hidden layer is enough to provide universal approximation as the depth goes to infinity. More precisely, we show that for any Lebesgue-integrable\footnote{A function $f$ is Lebesgue-integrable if $\int_{\R^d} |f(x)| dx < \infty$.} function $f:\R^d \rightarrow \R$, for any $\epsilon>0$, there exists a ResNet $R$ with ReLU activation and one neuron per hidden layer such that 
\begin{equation*}
	\int_{\R^d} |f(x) -R(x)| dx \leq \epsilon.
\end{equation*}
This result implies that, compared to fully connected networks, the identity mapping of ResNet indeed adds representational power for tall networks.


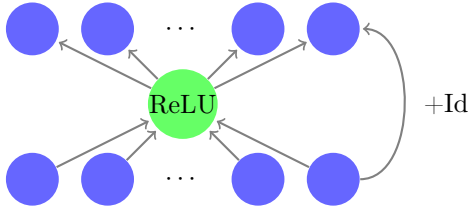
\begin{figure}[H]
\begin{center}
\def\layersep{1 cm}
	\begin{tikzpicture}[shorten >=1pt,->,draw=black!50, node distance=\layersep]
    \tikzstyle{every pin edge}=[<-,shorten <=1pt]
    \tikzstyle{neuron}=[circle,fill=black!25,minimum size=20pt,inner sep=0pt]
    \tikzstyle{input neuron}=[neuron, fill=blue!60];
    \tikzstyle{output neuron}=[neuron, fill=red!60];
    \tikzstyle{hidden neuron}=[neuron, fill=green!60];
    \tikzstyle{annot} = [text width=4em, text centered]

    \foreach \name / \y in {-2,-1,1,2}
        \node[input neuron] (I-\name) at (-\y,0) {};
        \node(Int) at (0,0){$\cdots$};

    \foreach \name / \y in {1,...,1}
        \path
            node[hidden neuron] (H-\name) at (0 cm,\layersep) {ReLU};
            
    \foreach \name / \y in {-2,-1,1,2}
        \path
            node[input neuron] (H2-\name) at (-\y cm,2*\layersep) {};
     \node(Int2) at (0,2*\layersep){$\cdots$};
     \node(Id) at (3.5,\layersep){+Id};

    \foreach \source in {-2,-1,1,2}
        \foreach \dest in {1,...,1}
            \path (I-\source) edge[thick] (H-\dest);

    \foreach \source in {1,...,1}
        \foreach \dest in {-2,-1,1,2}
            \path (H-\source) edge[thick] (H2-\dest);
            
	\path[every node/.style={font=\sffamily\small}]
	    (I--2) edge[bend right=90, thick] node [left] {} (H2--2);

\end{tikzpicture}
\caption{The basic residual block with one neuron per hidden layer.} \label{fig:residual block}
\end{center}
\end{figure}
The ResNet in our construction is built by stacking residual blocks of the form illustrated in Figure~\ref{fig:residual block}, with one neuron in the hidden layer.
A basic residual block consists of two linear mappings and a single ReLU activation \cite{hardt2016identity,resnet}. More formally, it is a function $\TU$ from $\R^d$ to $\R^d$ defined by
\begin{equation*}
	\TU(x) = V\Relu(Ux+u), 
\end{equation*}
where $U \in \R^{1 \times d}$, $V \in \R^{d \times 1}$, $u \in \R$ and the ReLU activation function is defined by 
\begin{equation}
	\text{ReLU}(x) = \max(x, 0) = [x]_+.
\end{equation}
After performing the nonlinear transformation, we add the identity to form the input of the next layer. The resulting ResNet is a combination of several basic residual blocks and a final linear output layer:
\begin{equation*}
	R(x) = \L \circ (Id+ \T_N) \circ (Id+ \T_{N-1}) \circ \cdots \circ (Id+ \T_{0}) (x), 
\end{equation*} 
where $\L: \R^d \rightarrow \R$ is a linear operator and $\T_i$ are basic one-neuron residual blocks. 

Unlike the original architecture \cite{resnet}, we do not include any convolutional layers, max pooling or batch normalization; the above simplified architecture turns out to be sufficient for universal approximation.


\section{A motivating example}\label{sec:example}
We begin by empirically exploring the difference between narrow fully connected networks, with $d$ neurons per hidden layer, and ResNet via a simple example: classifying the unit ball in the plane. 

The training set consists of randomly generated samples $(z_i, y_i)_{i=1 \cdots n} \in \R^2 \times \{-1,1\}$ with
\begin{equation*}
y_i=
\begin{cases*}
  1 & if $\Vert z_i\Vert_2 \leq 1$; \\
  -1 & if $2 \leq \Vert z_i \Vert_2 \leq 3$.
\end{cases*} 
\end{equation*}
We artificially create a margin between positive and negative samples to make the classification task easier. 
We use logistic loss as the loss 
	$\frac{1}{n} \sum \log (1+ e^{-y_i \hat{y_i}})$, 
where $\hat{y_i} = f_{\mathcal{N}} (z_i) $ is the output of the network on the $i$-th sample. 
After training, we illustrate the learned decision boundaries of the networks for various depths.
Ideally, we would expect the decision boundaries of our models to be close to the true distribution, i.e., the unit ball. 


\begin{figure}[H]
\begin{center}
\begin{tabular}{@{\hspace{-9pt}}c@{\hspace{-9pt}}c@{\hspace{-9pt}}c@{\hspace{-9pt}}c@{\hspace{-9pt}}c}
   Training data & 1 Hidden Layer &  2 Hidden Layers & 3 Hidden Layers & 5 Hidden Layers \\
   \includegraphics[height=2.1cm]{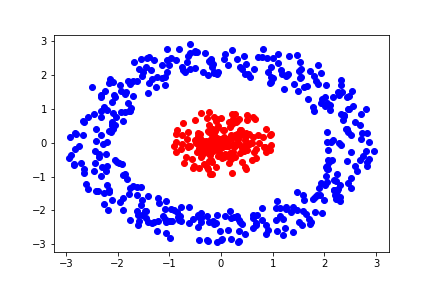} &
   \includegraphics[height=2.1cm]{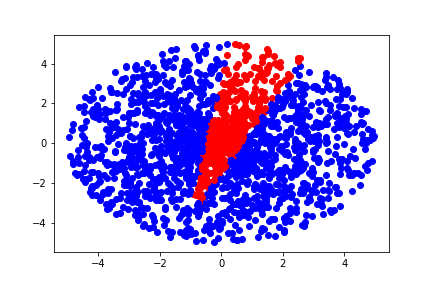} &
   \includegraphics[height=2.1cm]{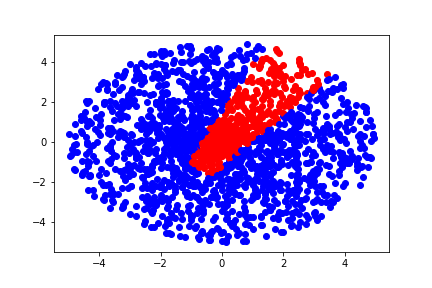} &
   \includegraphics[height=2.1cm]{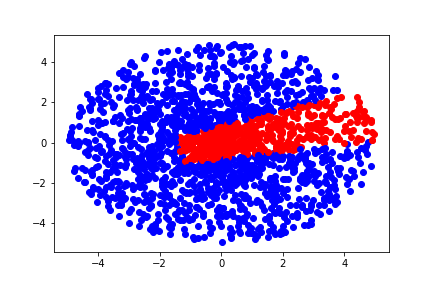} & 
   \includegraphics[height=2.1cm]{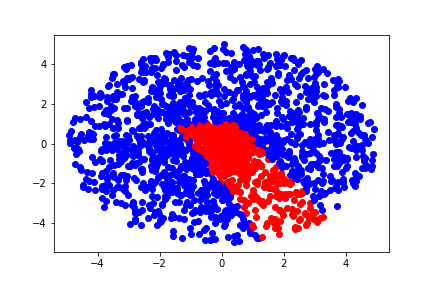} \\
      &
   \includegraphics[height=2.1cm]{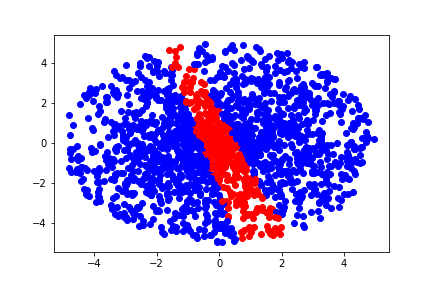} &
   \includegraphics[height=2.1cm]{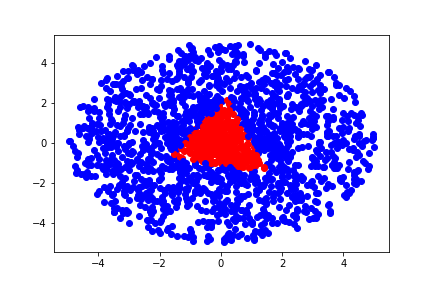} &
    \includegraphics[height=2.1cm]{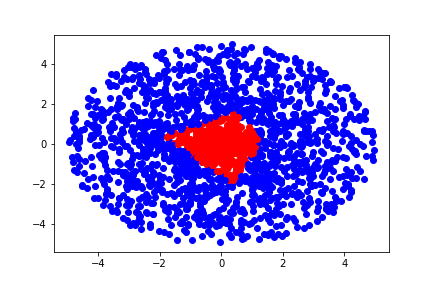} & 
   \includegraphics[height=2.1cm]{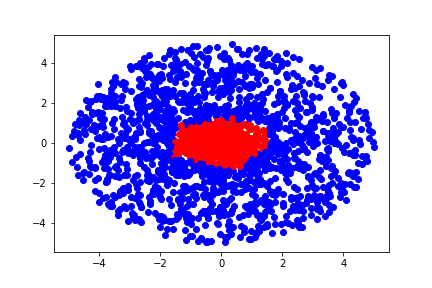} 
\
\vspace*{-0.3cm}
\end{tabular}
\caption{Decision boundaries obtained by training fully connected networks with width $d=2$ per hidden layer (top row) and ResNet (bottom row) with one neuron in the hidden layers on the unit ball classification problem. The fully connected networks fail to capture the true function, in line with the theory stating that width $d$ is too narrow for universal approximation. ResNet in contrast approximates the function well, empirically supporting our theoretical results.} \label{fig:FC} 
\end{center}
\end{figure}

Figure~\ref{fig:FC} shows the results. For the fully connected networks (top row), the learned decision boundaries have roughly the same shape for different depths: the approximation quality seems to not improve with increasing depth. While one may be inclined to argue that this is due to local optimality, our observation agrees with the results in \cite{lu2017expressive}:
%
\begin{prop}\label{prop:unbounded}
Let $f_{\mathcal{N}}: \R^d \rightarrow \R $ be the function defined by a fully connected network $\mathcal{N}$ with ReLU activation. Denote by $P= \left \{x \in \R^d \,|\, f_{\mathcal{N}}(x) > 0 \right \}$ the positive level set of $f_{\mathcal{N}}$. If each hidden layer of $\mathcal{N}$ has at most $d$ neurons, then 
\begin{equation*}
	\lambda(P)=0 \quad \text{ or } \quad  \lambda(P) = +\infty, \quad \text{where $\lambda$ denotes the Lebesgue measure.}
\end{equation*}
In other words, the level set of a narrow fully connected network is either unbounded or has measure zero. 
\end{prop}
The proof is a direct application of Theorem~2 of \cite{lu2017expressive}, see Appendix~E. Thus, even when the depth goes to infinity, a narrow fully connected network can never approximate a bounded region. Here we only show the case $d=2$ because we can easily visualize the data; the same observation will still hold in higher dimensions. A even stronger result has been developed very recently showing that any connected component of the decision boundaries obtained by a narrow fully connected network is unbounded~\cite{beise2018on}.

The decision boundaries for ResNet appear strikingly different: despite the even narrower width of one, from 2 hidden layers onwards, the ResNet represents the indicator of a bounded region. With increasing depth, the decision boundary seems to converge to the unit ball, implying that Proposition~\ref{prop:unbounded} cannot hold for ResNet. These observations motivate the universal approximation theorem that we will show in the next section.

\section{Universal approximation theorem}
In this section, we present the universal approximation theorem for ResNet with one-neuron hidden layers. We sketch the proof in the one-dimensional case; the induction for higher dimensions relies on similar ideas and builds on it. 

\begin{thm}[\textbf{Universal Approximation of ResNet}]
For any $d \in \N$, the family of ResNet with one-neuron hidden layers and ReLU activation function can universally approximate any $f \in \ell_1(\R^d)$. In other words, for any $\epsilon>0$, there is a ResNet $R$ with finitely many layers such that 
\begin{equation*}
	\int_{\R^d} |f(x) -R(x)| dx \leq \epsilon.
\end{equation*}
\end{thm}

\paragraph{Outline of the proof.} The proof starts with a well-known fact: the class of piecewise constant functions with compact support and finitely many discontinuities is dense in $\ell_1(\R^d)$. Thus it suffices to approximate any piecewise constant function. Given a piecewise constant function, we first construct a grid ``indicator'' function on its support, as shown in Figure~\ref{fig:grid cell}. This function is similar to an indicator function in the sense that it vanishes outside the support, but, instead of being constantly equal to one, a grid indicator function takes different constant values on different grid cells, see Definition~\ref{defi:indicator} for a formal definition. The property of having different function values creates a``fingerprint'' on each grid cell, which will help to distinguish them. Then, we divide the space into different level sets, such that one level set contains exactly one grid cell. Finally, we fit the function value on each grid cell, cell by cell. 

\paragraph{Sketch of the proof when $\bm{d=1}$.} We start with the one-dimensional case, which is central to our construction. As mentioned above, it is sufficient to approximate piecewise constant functions. Given a piecewise constant function $h$, there is a subdivision $-\infty < a_0 < a_1 < \cdots < a_M <+\infty$ such that
\begin{equation*}
	h(x) = \sum_{k=1}^M h_k \mathds{1}_{x \in [a_{k-1},a_k)},
\end{equation*}
where $h_k$ is the constant value on the $k$-th subdivision $I_k = [a_{k-1}, a_{k})$. 
We will approximate $h$ via trapezoid functions of the following form, shown in Figure~\ref{fig:trapezoid}.
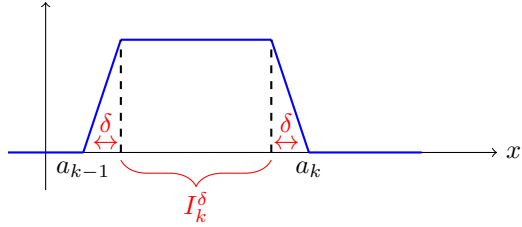
\begin{figure}[H]
\begin{center}
\begin{tikzpicture}  
      \draw[->] (-3,0) -- (3.5,0) node[right] {$x$};
      \draw[->] (-2.5,-0.5) -- (-2.5,2);
      \draw[domain=-3:-2,smooth,variable=\x,blue,thick] plot ({\x},{0});
      \draw[domain=-2:-1.5,smooth,variable=\x,blue,thick] plot ({\x},{3*(\x+2)});
      \draw[domain=-1.5:0.5,smooth,variable=\x,blue,thick] plot ({\x},{1.5});
      \draw[domain=0.5:1,smooth,variable=\x,blue,thick] plot ({\x},{-3*(\x-1)});
      \draw[domain=1:2.5,smooth,variable=\x,blue,thick] plot ({\x},{0});

	  \draw [dashed, thick] (-1.5,0) -- (-1.5,1.5);
	  \draw [dashed, thick] (0.5,0) -- (0.5,1.5);
	  
	  \draw (-1.7,0.3) node [below,red] {$\leftrightarrow$};
      \draw (-1.7,0.4) node [red]{$\delta$};
      
      \draw (0.7,0.3) node [below,red] {$\leftrightarrow$};
      \draw (0.7,0.4) node [red]{$\delta$};

      \draw (-2,0) node [below] {$a_{k-1}$};
      \draw (1,0) node [below] {$a_{k}$};
      
      
	\draw [decorate,decoration={brace,mirror,amplitude=10pt},red]
(-1.5, -0.1) -- (0.5,-0.1) node [red,midway,below, yshift = -0.3cm]{$I_k^\delta$};
\end{tikzpicture}
\vspace*{-0.2cm}
\caption{A trapezoid function, which is a continuous approximation of the indicator function. The parameter $\delta$ measures the quality of the approximation.} \label{fig:trapezoid}
\end{center}
\vspace*{-0.5cm}
\end{figure}
A trapezoid function is a simple continuous approximation of the indicator function. It is constant on the segment $I_k^\delta = [a_{k-1}+\delta, a_k -\delta]$ and linear in the $\delta$-tolerant region $I_k \backslash I_k^\delta$. As $\delta$ goes to zero, the trapezoid function tends point-wisely to the indicator function.

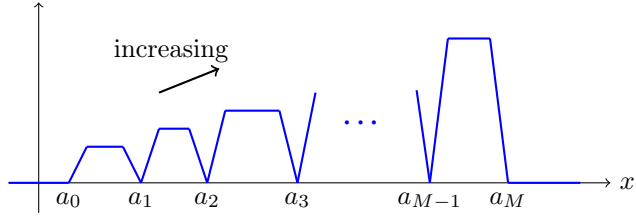
\begin{figure}[t]
\begin{center}
\begin{tikzpicture}[scale=0.8]  
      \draw[->] (-3,0) -- (7,0) node[right] {$x$};
      \draw[->] (-2.5,-0.5) -- (-2.5,3);
      \draw[domain=-3:-2,smooth,variable=\x,blue,thick] plot ({\x},{0});
      \draw[domain=-2:-1.7,smooth,variable=\x,blue,thick] plot ({\x},{2*(\x+2)});
      \draw[domain=-1.7:-1.1,smooth,variable=\x,blue,thick] plot ({\x},{0.6});
      \draw[domain=-1.1:-0.8,smooth,variable=\x,blue,thick] plot ({\x},{-2*(\x+0.8)});
      \draw[domain=-0.8: -0.5,smooth,variable=\x,blue,thick] plot ({\x},{3*(\x+0.8)});
	  \draw[domain=-0.5:0,smooth,variable=\x,blue,thick] plot ({\x},{0.9});
  	  \draw[domain=-0:0.3,smooth,variable=\x,blue,thick] plot ({\x},{-3*(\x-0.3)});
      \draw[domain=0.3:0.6,smooth,variable=\x,blue,thick] plot ({\x},{4*(\x-0.3)});
      \draw[domain=0.6:1.5,smooth,variable=\x,blue,thick] plot ({\x},{1.2});
      \draw[domain=1.5:1.8,smooth,variable=\x,blue,thick] plot ({\x},{-4*(\x-1.8)});
      \draw[domain=1.8:2.1,smooth,variable=\x,blue,thick] plot ({\x},{5*(\x-1.8)});
      \draw (2.4,1) node[right] {$\color{blue} \bm \cdots$};
      \draw[domain=3.78:4,smooth,variable=\x,blue,thick] plot ({\x},{-7*(\x-4)});
      \draw[domain=4:4.3,smooth,variable=\x,blue,thick] plot ({\x},{8*(\x-4)});
      \draw[domain=4.3:5,smooth,variable=\x,blue,thick] plot ({\x},{2.4});
      \draw[domain=5:5.3,smooth,variable=\x,blue,thick] plot ({\x},{-8*(\x-5.3)});
      \draw[domain=5.3:6.5,smooth,variable=\x,blue,thick] plot ({\x},{0});

      \draw (-2,0) node [below] {$a_0$};
      \draw (-0.8,0) node [below] {$a_1$};
      \draw (0.3,0) node [below] {$a_2$};
      \draw (1.8,0) node [below] {$a_3$};
      \draw (4,0) node [below] {$a_{M-1}$};
      \draw (5.3,0) node [below] {$a_M$};
      
      \draw[->,thick] (-0.5,1.5) -- (0.5,1.9);
      \draw (-0.3,2.2) node {increasing};      
\end{tikzpicture}
\caption{An increasing trapezoid function, which is trapezoidal on each subdivision and the constant value increases from left to right.} \label{fig:grid cell}
\end{center}
\vspace*{-0.4cm}
\end{figure}

A natural idea to approximate $h$ is to construct a trapezoid function on each subdivision $I_k$ and to then sum them up. This is the main strategy used in \cite{lu2017expressive,hanin2017approximating} to show a universal approximation theorem for fully connected networks with width at least $d+1$.  However, this strategy is not applicable for the ResNet structure because the summation requires memory of past components, and hence requires additional units in every layer. The width constraint of ResNet due to the identity mapping makes the difference here. 

In contrast, we construct our approximation in a sequential way: we build the components of the trapezoid function one after another. Due to the sequential construction, we can only build increasing trapezoid functions as shown in Figure~\ref{fig:grid cell}. Such functions are trapezoidal on each subdivision $I_k$ and the constant value on $I_k^\delta$ increases when $k$ grows. The construction relies on the following basic operations:

\begin{prop}[{\bf Basic operations}]
	The following operations are realizable by a single basic residual block of ResNet with one neuron:   
	\begin{enumerate}[label=(\alph*),leftmargin=0.5cm]
		\item \textbf{Shifting by a constant:} $R^+ = R + c\,\,$ for any $c \in \R$.
		\item \textbf{Min or Max with a constant:} $R^+ = \min \{ R,c \}$ or $R^+ = \max \{ R,c \}\,\, $ for any $c \in \R$.
		\item \textbf{Min or Max with a linear transformation:} $R^+ = \min \{ R, \alpha R +\beta \}$ (or $\max$) for any $\alpha, \beta \in \R$.
	\end{enumerate}
	where $R$ represents the input layer in the basic residual block and $R^+$ the output layer.
\end{prop}

Geometrically, operation (a) allows us to shift the function by a constant; operation (b) allows us to remove the level set $\{ R \geq c\}$ or $\{ R \leq c\}$ and operation (c) can be used to adjust the slope. 
With these basic operations at hand, we construct the increasing trapezoid function by induction on the subdivisions. For any $m \in [0,M]$, we construct a function $R_m$ satisfying
\begin{enumerate}[label=C\arabic*.]
	\item $R_m = 0$ on $(-\infty,a_0]$.
	\item $R_m$ is a trapezoid function on each $I_k$, for any $k =1, \cdots, m$. 
	\item $R_m = (k+1) \Vert h \Vert_\infty \,\,$ on $\,\, I_k^\delta = [a_{k-1}+ \delta, a_k -\delta]$ for any $k =1, \cdots, m$. 
	\item $R_m$ is bounded on $(-\infty, a_m]$ by $ 0 \leq R_m \leq (m+1) \Vert h \Vert_\infty$.
	\item $R_m(x) = - \frac{(m+1) \Vert h \Vert_\infty}{\delta} (x -a_m)$ if  $x \in [a_m, +\infty)$,
\end{enumerate}
where $\displaystyle \Vert h \Vert_\infty = \max_{k=1 \cdots M} |h_k|$ is the  infinity norm and $\delta>0$ measures the quality of the approximation. A geometric illustration of $R_m$ is shown in Figure~\ref{fig:Rm}. On the first $m$ subdivisions, $R_m$ is the restriction of the desired increasing trapezoid function. On $[a_m, +\infty)$, the function $R_m$ is a very steep linear function with negative slope that enables the construction of next subdivision.

\begin{samepage}
Given $R_m$, we sequentially stack three residual blocks to build $R_{m+1}$:
\begin{itemize}
	\item $R_m^+ = \max \left \{ R_m , -\left (1+ \frac{1}{m+1} \right ) R_m  \right \}  $; 
	\item $R_m^{++} = \min \left \{ R_m^+, -R_m^++ \frac{(m+2) \Vert h \Vert_\infty}{\delta}(a_{m+1}-a_m) \right \}$;
	\item $R_{m+1} = \min \{ R_m^{++}, (m+2) \Vert h \Vert_\infty \}$.
\end{itemize}
Figure~\ref{fig:Rm} illustrates the effect of these blocks: 
the first operation flips the linear part on $[a_m, +\infty)$ by adjusting the slope, the second operation folds the linear function in the middle of $[a_m,a_{m+1}]$, and finally we cut off the peak at the appropriate level $(m+2) \Vert h \Vert_\infty$.
\end{samepage}

\begin{figure}[H]
\hspace*{-1cm}
\begin{center}
\begin{minipage}{.22\textwidth}
\begin{tikzpicture} [scale=0.5]  
      \draw[->] (-3,0) -- (2,0) node[right] {$x$};
      \draw[->] (-2.5,-1.5) -- (-2.5,2)node[right] {$R_m$};
      \draw[domain=-3:-2,smooth,variable=\x,blue,thick] plot ({\x},{0});
      \draw[domain=-2:-1.7,smooth,variable=\x,blue,thick] plot ({\x},{2*(\x+2)});
      \draw[domain=-1.7:-1.1,smooth,variable=\x,blue,thick] plot ({\x},{0.6});
      \draw[domain=-1.1:-0.8,smooth,variable=\x,blue,thick] plot ({\x},{-2*(\x+0.8)});
      \draw[domain=-0.8: -0.5,smooth,variable=\x,blue,thick] plot ({\x},{3*(\x+0.8)});
	  \draw[domain=-0.5:0,smooth,variable=\x,blue,thick] plot ({\x},{0.9});
  	  \draw[domain=0:0.8,smooth,variable=\x,blue,thick] plot ({\x},{-3*(\x-0.3)});
      \draw (-2,0) node [below] {$a_0$};
      \draw (0.2,0) node [below] {$a_m$};
      \draw (2,2.5) node [below] { \circled{\tiny 1} };
\end{tikzpicture}
\end{minipage}
~~~
\begin{minipage}{.22\textwidth}
\begin{tikzpicture} [scale=0.5]  
      \draw[->] (-3,0) -- (2,0) node[right] {$x$};
      \draw[->] (-2.5,-0.5) -- (-2.5,3)node[right] {$R_m^+$};
      \draw[domain=-3:-2,smooth,variable=\x,blue,thick] plot ({\x},{0});
      \draw[domain=-2:-1.7,smooth,variable=\x,blue,thick] plot ({\x},{2*(\x+2)});
      \draw[domain=-1.7:-1.1,smooth,variable=\x,blue,thick] plot ({\x},{0.6});
      \draw[domain=-1.1:-0.8,smooth,variable=\x,blue,thick] plot ({\x},{-2*(\x+0.8)});
      \draw[domain=-0.8: -0.5,smooth,variable=\x,blue,thick] plot ({\x},{3*(\x+0.8)});
	  \draw[domain=-0.5:0,smooth,variable=\x,blue,thick] plot ({\x},{0.9});
  	  \draw[domain=-0:0.3,smooth,variable=\x,blue,thick] plot ({\x},{-3*(\x-0.3)});
      \draw[domain=0.3:0.8,smooth,variable=\x,red,thick] plot ({\x},{4*(\x-0.3)});
      \draw (-2,0) node [below] {$a_0$};
      \draw (0.3,0) node [below] {$a_m$};
      \draw (2,3.5) node [below] { \circled{\tiny 2} };
\end{tikzpicture}
\end{minipage}
~~~
\begin{minipage}{.22\textwidth}
\begin{tikzpicture} [scale=0.5] 
      \draw[->] (-3,0) -- (2.3,0) node[right] {$x$};
      \draw[->] (-2.5,-0.5) -- (-2.5,3) node[right] {$R_{m}^{++}$};
      \draw[domain=-3:-2,smooth,variable=\x,blue,thick] plot ({\x},{0});
      \draw[domain=-2:-1.7,smooth,variable=\x,blue,thick] plot ({\x},{2*(\x+2)});
      \draw[domain=-1.7:-1.1,smooth,variable=\x,blue,thick] plot ({\x},{0.6});
      \draw[domain=-1.1:-0.8,smooth,variable=\x,blue,thick] plot ({\x},{-2*(\x+0.8)});
      \draw[domain=-0.8: -0.5,smooth,variable=\x,blue,thick] plot ({\x},{3*(\x+0.8)});
	  \draw[domain=-0.5:0,smooth,variable=\x,blue,thick] plot ({\x},{0.9});
  	  \draw[domain=-0:0.3,smooth,variable=\x,blue,thick] plot ({\x},{-3*(\x-0.3)});
      \draw[domain=0.3:1.05,smooth,variable=\x,red,thick] plot ({\x},{4*(\x-0.3)});
      \draw[domain=1.05:2,smooth,variable=\x,red,thick] plot ({\x},{-4*(\x-1.8)});
      \draw (0.3,0) node [below] {$a_m$};
      \draw (1.8,0) node [below] {$a_{m+1}$};
      \draw (2.5,3.5) node [below] { \circled{\tiny 3} };
\end{tikzpicture}
\end{minipage}
~~~
\begin{minipage}{.22\textwidth}
\begin{tikzpicture}[scale=0.5] 
      \draw[->] (-3,0) -- (2.3,0) node[right] {$x$};
      \draw[->] (-2.5,-0.5) -- (-2.5,3)node[right] {$R_{m+1}$};
      \draw[domain=-3:-2,smooth,variable=\x,blue,thick] plot ({\x},{0});
      \draw[domain=-2:-1.7,smooth,variable=\x,blue,thick] plot ({\x},{2*(\x+2)});
      \draw[domain=-1.7:-1.1,smooth,variable=\x,blue,thick] plot ({\x},{0.6});
      \draw[domain=-1.1:-0.8,smooth,variable=\x,blue,thick] plot ({\x},{-2*(\x+0.8)});
      \draw[domain=-0.8: -0.5,smooth,variable=\x,blue,thick] plot ({\x},{3*(\x+0.8)});
	  \draw[domain=-0.5:0,smooth,variable=\x,blue,thick] plot ({\x},{0.9});
  	  \draw[domain=-0:0.3,smooth,variable=\x,blue,thick] plot ({\x},{-3*(\x-0.3)});
      \draw[domain=0.3:0.6,smooth,variable=\x,red,thick] plot ({\x},{4*(\x-0.3)});
      \draw[domain=0.6:1.5,smooth,variable=\x,red,thick] plot ({\x},{1.2});
      \draw[domain=1.5:2,smooth,variable=\x,red,thick] plot ({\x},{-4*(\x-1.8)});
      \draw (0.3,0) node [below] {$a_m$};
      \draw (1.8,0) node [below] {$a_{m+1}$};
      \draw (2,3.5) node [below] { \circled{\tiny 4} };
\end{tikzpicture}
\end{minipage}
\caption{A geometric construction of $R_{m+1}$ based on $R_m$. We build the next trapezoid function (red) and keep the previous ones (blue) unchanged. }\label{fig:Rm}
\end{center}
\vspace*{-0.3cm}
\end{figure}
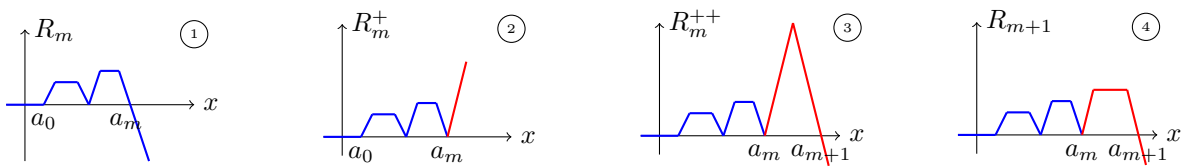

An important consideration is that we need to keep the function on previous subdivisions unchanged while building the next trapezoid function.
We achieve this by increasing the function values. The different values will be the basis for adjusting the function value in each subdivision to the final value of the target function we want to approximate.
Before proceeding with the adjustment, we remark that $R_M$ goes to $-\infty$ as $x \to \infty$. This negative ``tail'' is easily removed by performing a cut-off operation via the max operator. This gives us the desired increasing trapezoid function $R_M^*$.

To adjust the function values on the intervals $I_k^\delta$, we identify the $I_k^\delta$ via the level sets of $R_M^*$. This works because, by construction, $R_M^*= (k+1)\Vert h \Vert_\infty$ on $ I_k^\delta$. More precisely, we define the level sets $L_k =  \{  k \Vert h \Vert_\infty < R_M^* \leq (k+1) \Vert h \Vert_\infty \}$ (for $k=0,\cdots, M$) and adjust them one by one from highest to lowest value: for any $k=M, \cdots , 1$, we sequentially build
\begin{equation}
	R^*_{k-1}  = R^*_{k} + \frac{h_{k} - (k+1)\Vert h \Vert_\infty}{\Vert h \Vert_\infty} [R^*_{k} - k \Vert h \Vert_\infty]_+.
\end{equation}
Figure~\ref{fig:rescale} shows an illustration. In particular, the first step only scales the top level set because the ReLU activation $[R_M - M|h|_\infty]_+$ is active if and only if $x \in L_M$. The coefficients are appropriately selected such that after the scaling, the constant in $I_M^\delta$ matches $h_M$.
Hence, we have 
\begin{equation*}
R_{M-1}^*=
\begin{cases*}
  h_M & if $x \in I_M^\delta \subset L_M$; \\
  R_M^*    & if $x \notin L_M$.
\end{cases*}
\end{equation*}
Next, we set the second largest level set to $h_{M-1}$, and so on. As a result, the function $R_0^*$, obtained after rescaling all the level sets is the desired approximation of the piecewise constant function $h$. Concretely, we show that $R_0^*$ satisfies
\begin{itemize}
	\item $R_0^*= 0$ on $(-\infty,a_0]$ and $[a_M, +\infty)$.
	\item $R_0^* = h_k \,\,$ on $\,\, I_k^\delta = [a_{k-1}+ \delta, a_k -\delta]$ for any $k =1, \cdots, M$. 
	\item $R_0^*$ is bounded with $-\Vert h \Vert_\infty \leq R_0^* \leq \Vert h \Vert_\infty$. 
\end{itemize}
The detailed proof is deferred to the appendix. Importantly, our construction is valid for any small enough $\delta$ satisfying $\displaystyle 0 < 2\delta < \min_{k=1,\cdots, M} \{ a_k-a_{k-1} \}$. Hence, the approximation error, which is bounded by 
\begin{equation*}
	\int_\R |R_0^*(x)- h(x)| dx \leq 4M\delta \Vert h \Vert_\infty, 
\end{equation*}
can be made arbitrarily small by taking $\delta$ to $0$. This completes the proof.
\begin{figure}[h!]
\vspace*{-0.2cm}
\begin{center}
\hspace*{0.7cm}
\begin{tikzpicture}[scale=0.8]   
      \draw[->] (-3,0) -- (7,0) node[right] {$x$};
      \draw[->] (-2.5,-0.5) -- (-2.5,3);
      \draw[domain=-3:-2,smooth,variable=\x,blue,thick] plot ({\x},{0});
      \draw[domain=-2:-1.7,smooth,variable=\x,blue,thick] plot ({\x},{2*(\x+2)});
      \draw[domain=-1.7:-1.1,smooth,variable=\x,blue,thick] plot ({\x},{0.6});
      \draw[domain=-1.1:-0.8,smooth,variable=\x,blue,thick] plot ({\x},{-2*(\x+0.8)});
      \draw[domain=-0.8: -0.5,smooth,variable=\x,blue,thick] plot ({\x},{3*(\x+0.8)});
	  \draw[domain=-0.5:0,smooth,variable=\x,blue,thick] plot ({\x},{0.9});
  	  \draw[domain=-0:0.3,smooth,variable=\x,blue,thick] plot ({\x},{-3*(\x-0.3)});
      \draw[domain=0.3:0.6,smooth,variable=\x,blue,thick] plot ({\x},{4*(\x-0.3)});
      \draw[domain=0.6:1.5,smooth,variable=\x,blue,thick] plot ({\x},{1.2});
      \draw[domain=1.5:1.8,smooth,variable=\x,blue,thick] plot ({\x},{-4*(\x-1.8)});
      \draw[domain=1.8:2.1,smooth,variable=\x,blue,thick] plot ({\x},{5*(\x-1.8)});
      \draw (2.5,1) node[right] {$\color{blue} \bm \cdots$};
      \draw[domain=3.78:4,smooth,variable=\x,blue,thick] plot ({\x},{-7*(\x-4)});
      \draw[domain=4:4.2625,smooth,variable=\x,blue,thick] plot ({\x},{8*(\x-4)});
      \draw[domain=4.2625:4.3,smooth,variable=\x,red,line width=0.5mm] plot ({\x},{-64*(\x-4.3)- 0.3});
      \draw[domain=4.3:5,smooth,variable=\x,red,line width=0.5mm] plot ({\x},{-0.3});
      \draw[domain=5:5.0375,smooth,variable=\x,red,line width=0.5mm] plot ({\x},{64*(\x-5)- 0.3});
      \draw[domain=5.0375:5.3,smooth,variable=\x,blue,thick] plot ({\x},{-8*(\x-5.3)});
      \draw[domain=5.3:6.5,smooth,variable=\x,blue,thick] plot ({\x},{0});
	  \draw[domain=4.2625:4.3,smooth,variable=\x,red,dashed,line width=0.5mm] plot ({\x},{8*(\x-4)});
      \draw[domain=4.3:5,smooth,variable=\x,red,dashed,line width=0.5mm] plot ({\x},{2.4});
      \draw[domain=5:5.0375,smooth,variable=\x,red,dashed,line width=0.5mm] plot ({\x},{-8*(\x-5.3)});

      \draw (-2,0) node [below] {$a_0$};
      \draw (-0.8,0) node [below] {$a_1$};
      \draw (0.3,0) node [below] {$a_2$};
      \draw (1.8,0) node [below] {$a_3$};
      \draw (3.8,0) node [below] {$a_{M-1}$};
      \draw (5.3,0) node [below] {$a_M$};
      
      \draw [dashed, thick, black] (-2.5,2.1) -- (5,2.1);
      \draw (-2.5,2.1) node [left] {$M|h|_\infty$};
      \draw (5.3,2.3) node (1){};
      \draw (5.3,1.5) node (2){};
      \draw [->,red,line width=0.5mm] (1.south) to [out=-60,in=60] (2.north);
\end{tikzpicture}
\vspace*{0.2cm}

\begin{tikzpicture}[scale=0.8]
      \draw[->] (-3,0) -- (7,0) node[right] {$x$};
      \draw[->] (-2.5,-0.5) -- (-2.5,3);
      \draw[domain=-3:-2,smooth,variable=\x,blue,thick] plot ({\x},{0});
      \draw[domain=-2:-1.7,smooth,variable=\x,blue,thick] plot ({\x},{2*(\x+2)});
      \draw[domain=-1.7:-1.1,smooth,variable=\x,blue,thick] plot ({\x},{0.6});
      \draw[domain=-1.1:-0.8,smooth,variable=\x,blue,thick] plot ({\x},{-2*(\x+0.8)});
      \draw[domain=-0.8: -0.5,smooth,variable=\x,blue,thick] plot ({\x},{3*(\x+0.8)});
	  \draw[domain=-0.5:0,smooth,variable=\x,blue,thick] plot ({\x},{0.9});
  	  \draw[domain=-0:0.3,smooth,variable=\x,blue,thick] plot ({\x},{-3*(\x-0.3)});
      \draw[domain=0.3:0.6,smooth,variable=\x,blue,thick] plot ({\x},{4*(\x-0.3)});
      \draw (0.7,1) node[right] {$\color{blue} \bm \cdots$};
      \draw[domain=1.5:1.8,smooth,variable=\x,blue,thick] plot ({\x},{-4*(\x-1.8)});
      \draw[domain=1.8:2.057,smooth,variable=\x,blue,thick] plot ({\x},{7*(\x-1.8)});
      \draw[domain=2.057:2.1,smooth,variable=\x,red,line width=0.5mm] plot ({\x},{-35*(\x-2.1)+0.3});
      \draw[domain=2.1:3.7,smooth,variable=\x,red,line width=0.5mm] plot ({\x},{0.3});
      \draw[domain=3.7:3.743,smooth,variable=\x,red,line width=0.5mm] plot ({\x},{35*(\x-3.7)+0.3});
      \draw[domain=3.743:4,smooth,variable=\x,blue,thick] plot ({\x},{-7*(\x-4)});
      \draw[domain=4:4.225,smooth,variable=\x,blue,thick] plot ({\x},{8*(\x-4)});
      \draw[domain=4.225:4.2625,smooth,variable=\x,red] plot ({\x},{-40*(\x-4.2625)+0.3});
      \draw[domain=4.2625:4.2675,smooth,variable=\x,red] plot ({\x},{320*(\x-4.2625)+0.3});
      \draw[domain=4.2675:4.3,smooth,variable=\x,blue,thick] plot ({\x},{-64*(\x-4.3)- 0.3});
      \draw[domain=4.3:5,smooth,variable=\x,blue,thick] plot ({\x},{-0.3});
      \draw[domain=5:5.0325,smooth,variable=\x,blue,thick] plot ({\x},{64*(\x-5)- 0.3});
      \draw[domain=5.03265:5.0375,smooth,variable=\x,red] plot ({\x},{-320*(\x-5.0375)+0.3});
      \draw[domain=5.0375:5.075,smooth,variable=\x,red] plot ({\x},{40*(\x-5.0375)+0.3});
      \draw[domain=5.075:5.3,smooth,variable=\x,blue,thick] plot ({\x},{-8*(\x-5.3)});
      \draw[domain=5.3:6.5,smooth,variable=\x,blue,thick] plot ({\x},{0});
	  \draw[domain=2.057:2.1,smooth,variable=\x,red,dashed,line width=0.5mm] plot ({\x},{7*(\x-1.8)});
      \draw[domain=2.1:3.7,smooth,variable=\x,red,dashed,line width=0.5mm] plot ({\x},{2.1});
      \draw[domain=3.7:3.743,smooth,variable=\x,red,dashed,line width=0.5mm] plot ({\x},{-7*(\x-4)});
      
      \draw[domain=4.225:4.2625,smooth,variable=\x,red,line width=0.5mm, dotted] plot ({\x},{8*(\x-4)});
      \draw[domain=4.2625:4.2675,smooth,variable=\x,red,line width=0.5mm, dotted] plot ({\x},{-64*(\x-4.3)- 0.3});
      \draw[domain=5.0325:5.0375,smooth,variable=\x,red,line width=0.5mm, dotted] plot ({\x},{64*(\x-5)- 0.3});
      \draw[domain=5.0375:5.075,smooth,variable=\x,red,line width=0.5mm, dotted] plot ({\x},{-8*(\x-5.3)});

      \draw (-2,0) node [below] {$a_0$};
      \draw (-0.8,0) node [below] {$a_1$};
      \draw (0.3,0) node [below] {$a_2$};
      \draw (1.8,0) node [below] {$a_{M-2}$};
      \draw (3.8,0) node [below] {$a_{M-1}$};
      \draw (5.3,0) node [below] {$a_M$};
      
      \draw [dashed, thick, black] (-2.5,1.8) -- (5,1.8);
      \draw (-2.5,2.1) node [left] {$(M-1)|h|_\infty$};
      \draw (5.4,2.1) node (1){};
      \draw (5.4,1.2) node (2){};
      \draw [->,red,line width=0.5mm] (1.south) to [out=-60,in=60] (2.north);    
\end{tikzpicture}
\caption{A geometric illustration of the function adjustment procedure applied to the top level sets. At each step, we adjust one $I_k^\delta$ to the desired function value $h_k$.} \label{fig:rescale}
\end{center}
\vspace*{-0.5cm}
\end{figure}
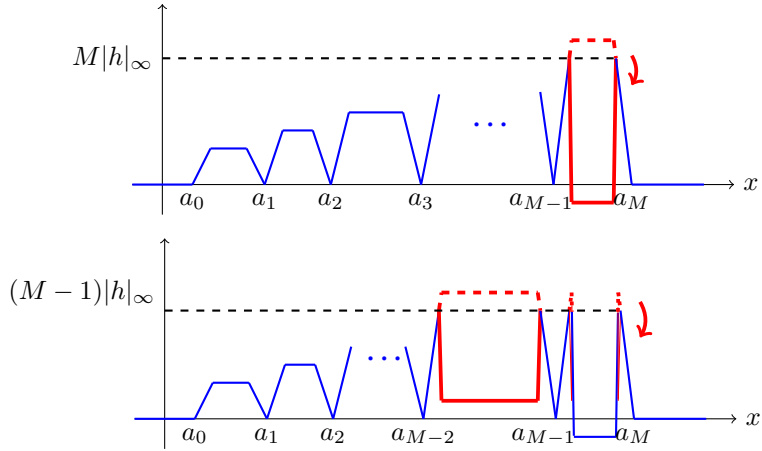

\paragraph{Extension to higher dimensions.} The last step of the one-dimensional construction is performed by sliding through all the grid cells and adjusting the function value sequentially. This procedure can be done regardless of the dimension. Therefore, it suffices to build a $d$-dimensional grid indicator function, which is a generalization of the increasing trapezoid function in high dimension space, see Definition~\ref{defi:indicator}.

We perform an induction over dimensions and the main idea is to sum up an appropriate one-dimensional grid indicator function and an appropriate $d-1$ dimensional grid indicator function, as illustrated in Figure~\ref{fig:3d}. 

\begin{figure}[H]
\begin{center}
\begin{minipage}{0.3\textwidth}
	\includegraphics[height=3cm]{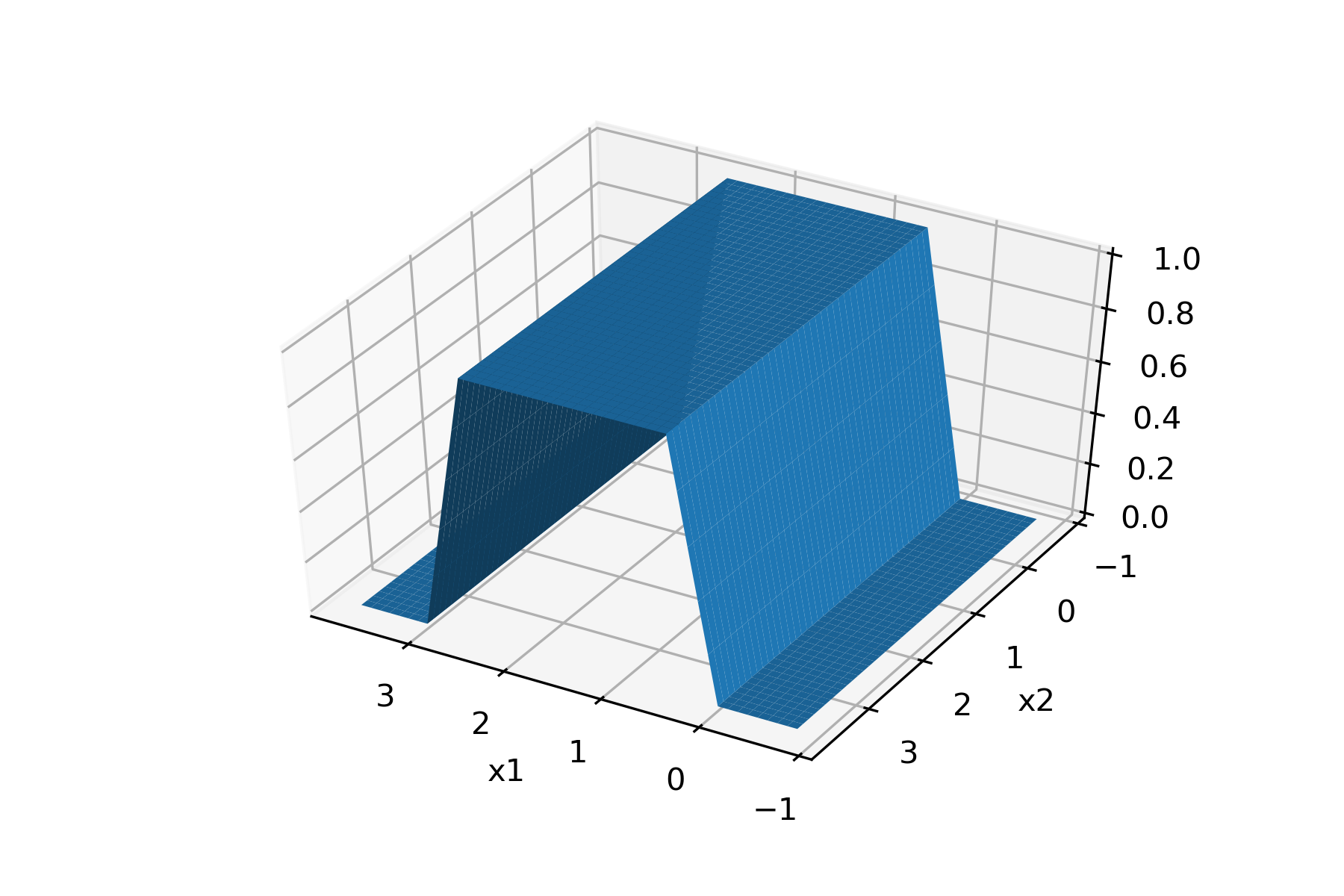}
\end{minipage}\
~~~$+$
\begin{minipage}{0.3\textwidth}
	   \includegraphics[height=3cm]{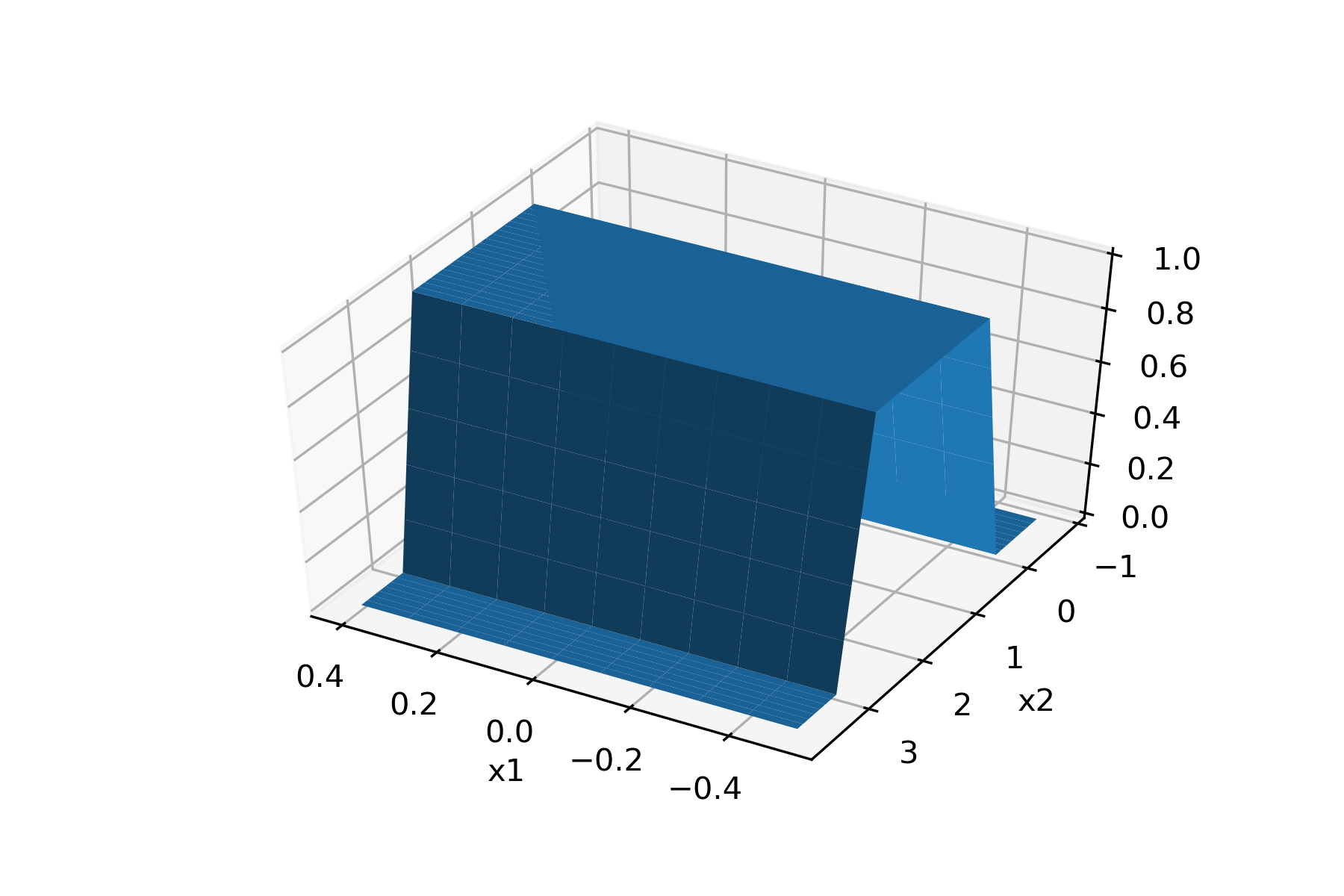}
\end{minipage}\
~~~$=$ \hspace{-0.2cm}
\begin{minipage}{0.3\textwidth}
	\includegraphics[height=3cm]{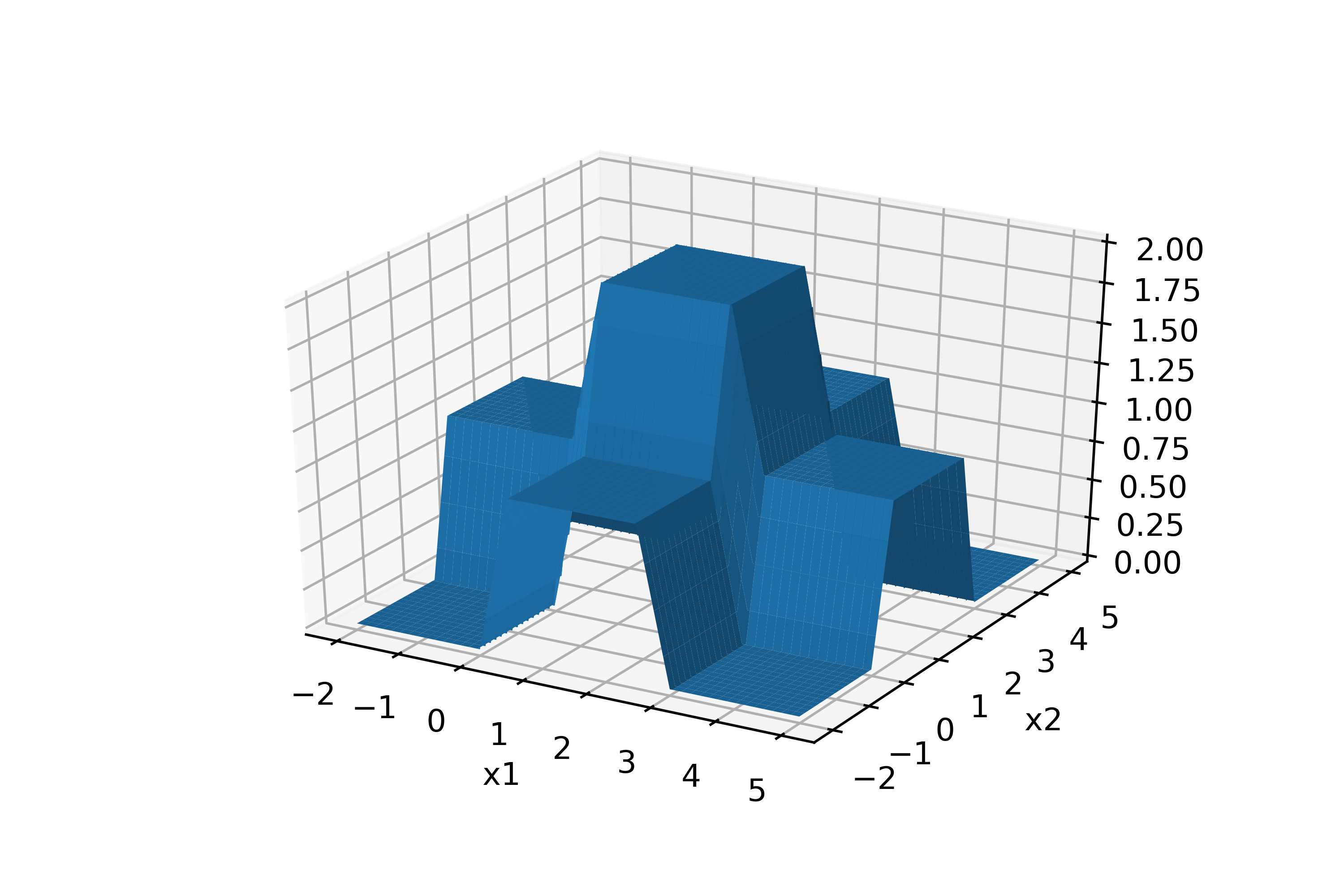} 
\end{minipage}
\vspace*{-0.2cm}
\caption{One dimensional grid indicator functions on the first (left) and second (middle) coordinate. Both functions can be constructed independently by our one hidden unit ResNet. 
  \label{fig:3d} }
\end{center}
\vspace*{-0.3cm}
\end{figure}

The summation gives the desired shape inside each grid cell. However, it also makes some regions positive that were previously zero. We address this issue via another separate level set property: there is a threshold $T$ such that a) the function value inside each $I_k^\delta$ is larger than $T$; b) the function values outside the grid cells are smaller than $T$. Therefore, the desired grid indicator function can be obtained by performing a max operator with the threshold $T$, i.e., cutting off the smaller values and setting them to zero (see Appendix~C).

\paragraph{Number of neurons/layers.}A straightforward consequence of our construction is that we can approximate any piecewise constant function to arbitrary accuracy with a ResNet of $O(\text{number of grid cells})$ hidden units/layers. The most space-consuming procedure is the function adjusting procedure which requires going through each of the grid cells one by one. Nevertheless, it is worth remarking that this procedure can be parallelized if we allow more hidden units per layer.

Deriving an exact relationship between the original target function~$f$ and the required number of grid cells is nontrivial and is highly dependent on characteristics of $f$. In particular, when the function $f$ is continuous, this number is related to the modulus of continuity of $f$ defined by 
\begin{displaymath}
	\omega_K(r) = \max_{x,y \in K, \\ \Vert x -y \Vert \leq r} |f(x)-f(y)|,
\end{displaymath}
where $K$ is any compact set and $r$ represents the radius of the discretization. Given a desired approximation accuracy $\epsilon$, we need to 
\begin{itemize}
	\item first,  determine a compact set $K$ such that $\int_{\R^d \backslash K} | f| \leq \epsilon $ and restrict $f$ to $K$;
	\item second, determine $r$ such that $\omega_K(r) \leq \epsilon/ \text{Vol}(K)$.
\end{itemize}
Then, the number of grid cells is $O(1/r^d)$. This dependence is suboptimal in the exponent, and it may be possible to improve it using a similar strategy as \cite{yarotsky2018optimal}. Also, by imposing stronger smoothness assumptions, this number may be reducible dramatically \cite{barron1993universal,mhaskar1996neural,yarotsky2017error}. These improvements are not the main focus of this paper, and we leave them for future work.

\section{Discussion and concluding remarks}
In this paper, we have shown the universal approximation theorem of the ResNet structure with one unit per hidden layer. This result stands in contrast to recent results on fully connected networks, for which universal approximation fails with width $d$ or less. To conclude, we add some final remarks and implications.


\paragraph{ResNet vs Fully connected networks.} While we achieve universal approximation with only one hidden neuron in each basic residual block, one may argue that the structure of ResNet still passes the identity to the next layer. This identity map could be counted as $d$ hidden units, resulting in a total of $d+1$ hidden unites per residual block, and could be viewed as making the network a width ($d+1$) fully connected network.
But, even from this angle, ResNet corresponds to a compressed or sparse version of a fully connected network. In particular, a width ($d+1$) fully connected network has $O(d^2)$ connections per layer, whereas only $O(d)$ connections are present in ResNet thanks to the identity map. This ``overparametrization'' of fully connected networks may be a patrial explanation why dropout \cite{dropout} has been observed to be beneficial for such networks. By the same argument, our result implies that width ($d+1$) fully connected networks are universal approximators, which is the minimum width needed \cite{hanin2017approximating}. 





\paragraph{Why does universal approximation matter?} As shown in Section~\ref{sec:example}, a width $d$ fully connected network can never approximate a compact decision boundary even if we allow infinite depth. However, in high dimensional space, it is very hard to visualize and check the obtained decision boundary. 
The universal approximation theorem then provides a sanity check, and ensures that, in principle, we are able to capture any desired decision boundary. 

\paragraph{Training efficiency.} The universal approximation theorem only guarantees the possibility of approximating any desired function, but it does not guarantee that we will actually find it in practice by running SGD or any other optimization algorithm. Understanding the efficiency of training may require a better understanding of the optimization landscape, a topic of recent attention \cite{choromanska2015loss, kawaguchi2016deep,nguyen2017loss, shalev2017weight, du2017gradient,yuSrJa17,2018arXiv180406739S}.

Here, we try to provide a slightly different angle. By our theory, ResNet with one-neuron hidden layers is already a universal approximator. In other words, a ResNet with multiple units per layer is in some sense an over-parametrization of the model, and over-parametrization has been observed to benefit optimization \cite{zhang2016understanding, brutzkus2017sgd, arora2018optimization}. This might be one reason why training a very deep ResNet is ``easier'' than training a fully connected network. A more rigorous analysis is an interesting direction for future work. 

\paragraph{Generalization.} Since a universal approximator is able to fit any function, one might expect it to overfit very easily. Yet, it is commonly observed that deep networks generalize surprisingly well on the test set. The explanation of this phenomenon is orthogonal to our paper, however, knowing the universal approximation capability is an important building block of such a theory. Moreover, the above-mentioned ``over-parametrization'' implied by our results may play a role too.



To conclude, we have shown a universal approximation theorem for ResNet with one-neuron hidden layers. This theoretically distinguishes them from fully connected networks. 
To some extent, our construction also theoretically motivates the current practice of going deeper and deeper in the ResNet architecture.

\bibliographystyle{abbrv}
\bibliography{DL}

\appendix
%
%
%

%
%
\newpage
This supplementary material is devoted to the theoretical proof of the universal approximation theorem of ResNet. We start with the one dimensional case and some basic operations, then we extend the result to high dimension by induction.

\section{Notations and preliminary }
In this section, we set up the notations and prepare some tools towards the universal approximation theorem. We first define the class of piecewise constant functions with compact support and finite many discontinuities.

\begin{defi}\label{def:piecewise}
	A function $h: \R^d \rightarrow \R$ is called piecewise constant  with compact support and finite many discontinuities if we can partition the space into finite many grid cells such that $h$ vanishes outside the grids and is constant inside each grid cell. More precisely, 	 for any coordinate  $i \in [1,d]$, there is a subdivision and $a^i_0 < \cdots < a^i_{M_i} $ such that
	\begin{enumerate}
		\item  $h=0$ outside $I = [a^1_0,a^1_{M_1}] \times [a^2_0,a^2_{M_2}] \times \cdots \times [a^{d}_0,a^{d}_{M_d}]$. 
		\item  $h$~is constant on each small cube $[a^1_{i_1}, a^1_{i_1+1}) \times [a^2_{i_2}, a^2_{i_2+1}) \times \cdots \times [a^{d}_{i_{d}}, a^{d}_{i_{d}+1})$. 
	\end{enumerate}
We denote the family of piecewise constant with compact support and finite many discontinuities by $PC(\R^d)$. Moreover, we abbreviate the terminology by simply calling piecewise constant functions. 
\end{defi}

\begin{thm}\label{thm:PC}
	The class of piecewise constant functions is dense in $\ell_1(\R^d)$.
\end{thm}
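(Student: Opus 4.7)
The plan is to chain together three approximations, each contributing at most $\varepsilon/3$ to the error. Fix $f \in L^1(\R^d)$ and $\varepsilon > 0$. First I would truncate to a large cube: since $\int_{\R^d}|f| < \infty$, dominated convergence applied to $f\mathbf{1}_{I_R}$ with $I_R = [-R,R]^d$ gives $\|f - f\mathbf{1}_{I_R}\|_{L^1} \to 0$ as $R \to \infty$, so I pick $R$ making this below $\varepsilon/3$.

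Next I would invoke the classical density of $C_c(\R^d)$ in $L^1(\R^d)$ (proved via mollification or via Lusin's theorem) to find a continuous $g$ supported in some enlarged cube $I = [-R',R']^d$ with $\|f\mathbf{1}_{I_R} - g\|_{L^1} < \varepsilon/3$. Finally, because $g$ is continuous on the compact cube $I$, it is uniformly continuous there. I would subdivide each axis $[-R',R']$ into $M$ equal subintervals and choose $M$ large enough that the oscillation of $g$ on each resulting small cube $C$ is below $\varepsilon/(3|I|)$. Defining $h$ on each such $C$ to equal $g$ evaluated at (say) the lower corner of $C$, and $h = 0$ outside $I$, I obtain $h \in PC(\R^d)$ with
$$\|g - h\|_{L^1} \le |I| \cdot \sup_{x \in I}|g(x) - h(x)| < \varepsilon/3.$$
The triangle inequality then yields $\|f - h\|_{L^1} < \varepsilon$, proving density.

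The main obstacle, modest as it is, lies in matching the rigid product-grid structure of Definition~\ref{def:piecewise}: the partition of $I$ must come from a common subdivision on each coordinate axis rather than an arbitrary dissection into rectangles. This is handled for free by using equal subdivisions of a single bounding cube, which automatically yield the required $a^i_0 < \cdots < a^i_{M_i}$ on each axis. Everything else is measure-theoretic bookkeeping; the substantive (non-elementary) input being imported is the density of $C_c(\R^d)$ in $L^1(\R^d)$.
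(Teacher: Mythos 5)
Your argument is correct: truncation, density of $C_c(\R^d)$ in $\ell_1(\R^d)$, and uniform continuity on a compact cube together give the approximation, and your use of equal subdivisions of a single bounding cube does produce the common product-grid structure that Definition~\ref{def:piecewise} demands (the bound $\Vert g-h\Vert_{\ell_1}\leq |I|\sup_{I}|g-h|$ is valid since both functions vanish off $I$). The paper, however, does not argue this way at all: it treats the theorem as immediate ``from the definition of Lebesgue measure,'' i.e.\ the classical measure-theoretic route in which simple functions are dense in $\ell_1$ and any measurable set of finite measure is approximated in measure by a finite union of axis-parallel boxes (outer regularity via coverings by rectangles), after which one refines all box endpoints into a single common subdivision on each axis. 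The two routes buy different things: yours imports the heavier classical fact that $C_c(\R^d)$ is dense (itself usually proved through simple functions or Lusin's theorem) but then needs only uniform continuity and gets the grid structure for free from the equal subdivision; the measure-theoretic route avoids continuous functions entirely and is closer to ``directly from the definition,'' at the small cost of the bookkeeping step that merges finitely many boxes into one product grid. A minor remark on your write-up: the initial truncation to $I_R$ is redundant, since the density of $C_c(\R^d)$ applies to $f$ directly; it is harmless but lets you drop one of the three $\varepsilon/3$ terms.
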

Theorem \ref{thm:PC} is a well known result directly derived from the definition of Lebesgue measure. As a result, it is sufficient to prove that ResNet can approximate any piecewise constant function arbitrarily well, which is the main objective of the following proof. We start by showing some basic operations allowed by ResNet with one unit per hidden layer.

\begin{prop}[{\bf Basic operations}]
	The following operations are realizable by a single basic residual block of ResNet with one neuron:   
	\begin{enumerate}[label=(\alph*),leftmargin=0.5cm]
		\item \textbf{Shifting by a constant:} $R^+ = R + c\,\,$ for any $c \in \R$.
		\item \textbf{Min or Max with a constant:} $R^+ = \min \{ R,c \}$ or $R^+ = \max \{ R,c \}\,\, $ for any $c \in \R$.
		\item \textbf{Min or Max with a linear transformation:} $R^+ = \min \{ R, \alpha R +\beta \}$ (or $\max$) for any $\alpha, \beta \in \R$.
	\end{enumerate}
	where $R$ represents the input layer in the basic residual block and $R^+$ the output layer.
\end{prop}

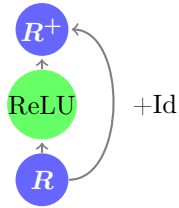
\begin{figure}[H]
\begin{center}
\def\layersep{1 cm}
	\begin{tikzpicture}[shorten >=1pt,->,draw=black!50, node distance=\layersep]
    \tikzstyle{every pin edge}=[<-,shorten <=1pt]
    \tikzstyle{neuron}=[circle,fill=black!25,minimum size=20pt,inner sep=0pt]
    \tikzstyle{input neuron}=[neuron, fill=blue!60];
    \tikzstyle{output neuron}=[neuron, fill=red!60];
    \tikzstyle{hidden neuron}=[neuron, fill=green!60];
    \tikzstyle{annot} = [text width=4em, text centered]

     \node[input neuron] (I-0) at (0,0){$\color{white} \bm R$};

    \foreach \name / \y in {1,...,1}
        \path
            node[hidden neuron] (H-\name) at (0 cm,\layersep) {ReLU};
            
     \node[input neuron] (O-0) at (0,2*\layersep){$\color{white} \bm{R^+}$};
     \node(Id) at (1.5,\layersep){+Id};

            \path (I-0) edge[thick] (H-1);

            \path (H-1) edge[thick] (O-0);
            
	\path[every node/.style={font=\sffamily\small}]
	    (I-0) edge[bend right=90, thick] node [left] {} (O-0);
\end{tikzpicture}
\caption{The basic residual block in one dimension.} \label{appfig:residual block}
\end{center}
\end{figure}

\begin{proof}
	It is easy to see that (c) implies (a) and (b). We now prove (c). Indeed, the following coefficient do the job: given $\alpha, \beta \in R$, 
	\begin{align*}
		& R^+  = R + [(\alpha-1)R+\beta ]_+ =\max \{R, \alpha R + \beta  \}, \\
		\text{and } & R^+ = R- [(1-\alpha)R -\beta ]_+ =\min \{R, \alpha R + \beta  \}.
	\end{align*}
\end{proof}
These basic operations are extensively used in the following construction. Intuitively, operation (a) allows us to shift the function; operation (b) allows us to cut off the level set $\{ R \geq c\}$ or $\{ R \leq c\}$ and operation (c) is more complex, which can be used to adjust the slope.

\section{Warm Up: One Dimension case}\label{sec:d=1}
We start with the one dimension case. As we mentioned, it is sufficient to approximate piecewise constant functions. Given a piecewise constant function $h$, there is a subdivision $-\infty < a_0 < a_1 < \cdots < a_M <+\infty$ such that
\begin{equation*}
	h(x) = \sum_{k=1}^M h_k \mathds{1}_{x \in [a_{k-1},a_k)},
\end{equation*}
where $h_k$ is the constant value on the $k$-th subdivision $I_k = [a_{k-1}, a_{k})$. We are going to approximate $h$ using trapezoid function.  
%
%
%
%
%
%
%
%

\begin{prop}\label{prop:d=1}
Given a piecewise constant function $h$, for any $\delta>0$ satisfying $2\delta < \min_{k=1, \cdots, M} \{ a_k -a_{k-1}\}$, there exists a ResNet $R$ such that 
\begin{itemize}
	\item  $R(x) =0$ for $x \in (-\infty, a_0)$ and $ x \in [a_M, +\infty)$.
	\item  $R(x)=h_k$ for $x \in I_k^\delta = [a_{k-1}+\delta,a_{k}-\delta]$, for $k=1, \cdots, M$.
	\item $R$ is bounded with $-\Vert h \Vert_\infty \leq R \leq \Vert h \Vert_\infty$.
\end{itemize}
\end{prop}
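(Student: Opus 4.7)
\medskip
\noindent\textbf{Proof plan.}
The plan is to explicitly construct $R$ as a composition of basic residual blocks of the type built in the previous proposition. Since each such block outputs a continuous piecewise linear function of its scalar input, the resulting $R(x)$ will be a continuous piecewise linear function of $x$. I would first fix a target piecewise linear function $\tilde R:\R\to\R$ whose breakpoints lie in the finite set
\[
\{a_0,\ a_0+\delta,\ a_1-\delta,\ a_1+\delta,\ \ldots,\ a_{M-1}+\delta,\ a_M-\delta,\ a_M\},
\]
which is identically $0$ outside $[a_0,a_M]$, equals $h_k$ on each plateau $I_k^\delta$, and linearly interpolates between $h_k$ and $h_{k+1}$ on each transition zone $[a_k-\delta,a_k+\delta]$. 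By construction $\tilde R$ satisfies the first two required properties, and since it is a convex combination of the values $\{0,h_1,\dots,h_M\}$ at every point, $\|\tilde R\|_\infty \leq \|h\|_\infty$, giving the third. So the task reduces to realizing $\tilde R$ as a composition of basic residual blocks.

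The construction proceeds in three phases. In the \emph{initial phase}, a shift by $-a_0$ (operation (a)) followed by $\max(\cdot,0)$ (operation (b)) turns $R_0(x)=x$ into $R(x)=(x-a_0)_+$, vanishing for $x\leq a_0$ and linear with slope $1$ beyond. In the \emph{bending phase}, I would process the interior breakpoints $a_0+\delta,\ a_1-\delta,\ a_1+\delta,\ldots,\ a_M-\delta$ from left to right. At the $j$-th breakpoint $x_j$, we apply operation (c) with a $\min$ or $\max$ and with $\alpha_j,\beta_j$ chosen so that (i) the threshold $\beta_j/(1-\alpha_j)$ equals the current value $R(x_j)$, which forces the kink of the new layer to lie exactly at $x=x_j$, and (ii) the new slope past $x_j$ matches the desired slope of $\tilde R$ on the adjacent piece (namely $0$ on a plateau, $\pm(h_{k+1}-h_k)/(2\delta)$ on a transition). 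Finally, in the \emph{closing phase}, one last clip via operation (b) (a $\max$ or $\min$ with $0$ as appropriate) enforces $R(x)=0$ for $x\geq a_M$. The bound $|R|\leq\|h\|_\infty$ is maintained throughout by direct inspection of the slopes and endpoint values, and can be enforced explicitly by inserting additional $\min(\cdot,\|h\|_\infty)$ and $\max(\cdot,-\|h\|_\infty)$ clips at the end.

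The main obstacle is the \emph{non-monotonicity} of $\tilde R$ when the sequence $(h_k)$ changes sign or decreases, because a single block of form $R\mapsto\min(R,\alpha R+\beta)$ applied to a non-monotonic $R(x)$ can introduce \emph{several} new kinks (one at each preimage of the threshold $\beta/(1-\alpha)$), and these spurious kinks could destroy the already-built portions of $\tilde R$. To handle this, I would order the bending operations so that each new min/max block's threshold value $\beta_j/(1-\alpha_j)$ is attained by the current $R$ at exactly one $x$, namely the intended breakpoint $x_j$. Concretely, this is achieved by processing ascending segments of $\tilde R$ while the running $R$ is still monotone, and introducing each descending segment only after ensuring the current plateau lies strictly above the threshold that will be introduced next. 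A careful verification that this ordering is always possible — essentially by sweeping through the breakpoints in order and checking the range of $R$ on the not-yet-built tail — is the technical core that needs to be carried out.
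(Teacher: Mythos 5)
There is a genuine gap, and it is precisely at the point you defer (``the technical core that needs to be carried out''). In this one-dimensional setting a one-neuron residual block computes $R^+ = R + u\,[wR+b]_+$, i.e.\ every layer is a map applied to the \emph{current scalar value} $R$; the original coordinate $x$ is not available separately. Consequently, once your running function has a plateau, all points on that plateau (and on its linear continuation, if the slope past the current breakpoint is $0$) carry the same value and are forever indistinguishable to later layers. But your bending phase explicitly sets ``new slope $0$ on a plateau'': after you flatten at the first plateau value $h_1$ at $x=a_0+\delta$, the running function is constant on $[a_0+\delta,+\infty)$, and no subsequent value-based operation can create the kink at $a_1-\delta$ that ends that plateau. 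The same obstruction reappears whenever a later transition of $\tilde R$ must pass through a value already occupied by an earlier plateau (e.g.\ $h_1=1$, $h_2=0.5$, $h_3=1$): the difficulty is not merely that a threshold has several preimages, so no reordering of ascending/descending segments can fix it. The claim that each threshold can be ``attained by the current $R$ at exactly one $x$'' is unachievable once any plateau of positive length exists at a value that must be crossed again, so the left-to-right construction of $\tilde R$ collapses at the first interior plateau.

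The paper's proof is built around exactly this obstruction, and resolves it with two ideas absent from your plan. First, it never targets $\tilde R$ directly: it constructs an auxiliary \emph{increasing trapezoid} function whose plateau heights are the pairwise distinct values $(k+1)\Vert h\Vert_\infty$, and, crucially, it maintains as an invariant (C5) that beyond the last processed point $a_m$ the function is a strictly decreasing linear ``tail'' taking only negative values. Since the tail's values are disjoint from the values $[0,(m+1)\Vert h\Vert_\infty]$ used by the already-built part, the tail can be flipped, folded at the midpoint of $[a_m,a_{m+1}]$, and clipped to create the next trapezoid without touching anything previously built — this is how spatial information survives even though all operations are value-based. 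Second, only after this grid indicator function exists does the paper set the plateau values to $h_k$, processing the \emph{level sets from the top down} via $R^*_{k-1}=R^*_k+\frac{h_k-(k+1)\Vert h\Vert_\infty}{\Vert h\Vert_\infty}[R^*_k-k\Vert h\Vert_\infty]_+$, each step active only on the top remaining level and depositing the adjusted plateau into $[-\Vert h\Vert_\infty,\Vert h\Vert_\infty]$, below all untouched levels. To repair your argument you would need some equivalent of these two mechanisms (a sloped tail as carrier of position, and separated levels adjusted afterwards); as written, the proposal does not contain them.
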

\begin{proof}
We first construct the increasing trapezoid function $R_M^*$, as shown in Figure~\ref{appfig:grid cell}. It is a trapezoid function on each $I_k$ with ``increasing'' value. 

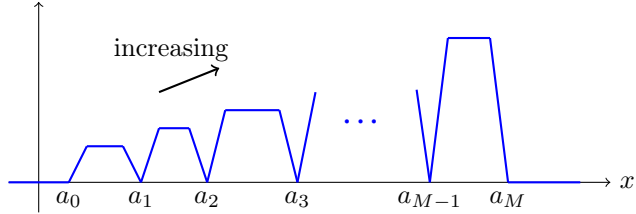
\begin{figure}[H]
\begin{center}
\begin{tikzpicture}[scale =0.8]  
      \draw[->] (-3,0) -- (7,0) node[right] {$x$};
      \draw[->] (-2.5,-0.5) -- (-2.5,3);
      \draw[domain=-3:-2,smooth,variable=\x,blue,thick] plot ({\x},{0});
      \draw[domain=-2:-1.7,smooth,variable=\x,blue,thick] plot ({\x},{2*(\x+2)});
      \draw[domain=-1.7:-1.1,smooth,variable=\x,blue,thick] plot ({\x},{0.6});
      \draw[domain=-1.1:-0.8,smooth,variable=\x,blue,thick] plot ({\x},{-2*(\x+0.8)});
      \draw[domain=-0.8: -0.5,smooth,variable=\x,blue,thick] plot ({\x},{3*(\x+0.8)});
	  \draw[domain=-0.5:0,smooth,variable=\x,blue,thick] plot ({\x},{0.9});
  	  \draw[domain=-0:0.3,smooth,variable=\x,blue,thick] plot ({\x},{-3*(\x-0.3)});
      \draw[domain=0.3:0.6,smooth,variable=\x,blue,thick] plot ({\x},{4*(\x-0.3)});
      \draw[domain=0.6:1.5,smooth,variable=\x,blue,thick] plot ({\x},{1.2});
      \draw[domain=1.5:1.8,smooth,variable=\x,blue,thick] plot ({\x},{-4*(\x-1.8)});
      \draw[domain=1.8:2.1,smooth,variable=\x,blue,thick] plot ({\x},{5*(\x-1.8)});
      \draw (2.4,1) node[right] {$\color{blue} \bm \cdots$};
      \draw[domain=3.78:4,smooth,variable=\x,blue,thick] plot ({\x},{-7*(\x-4)});
      \draw[domain=4:4.3,smooth,variable=\x,blue,thick] plot ({\x},{8*(\x-4)});
      \draw[domain=4.3:5,smooth,variable=\x,blue,thick] plot ({\x},{2.4});
      \draw[domain=5:5.3,smooth,variable=\x,blue,thick] plot ({\x},{-8*(\x-5.3)});
      \draw[domain=5.3:6.5,smooth,variable=\x,blue,thick] plot ({\x},{0});

      \draw (-2,0) node [below] {$a_0$};
      \draw (-0.8,0) node [below] {$a_1$};
      \draw (0.3,0) node [below] {$a_2$};
      \draw (1.8,0) node [below] {$a_3$};
      \draw (4,0) node [below] {$a_{M-1}$};
      \draw (5.3,0) node [below] {$a_M$};
      
      \draw[->,thick] (-0.5,1.5) -- (0.5,1.9);
      \draw (-0.3,2.2) node {increasing};      
\end{tikzpicture}
\caption{A geometric view of the increasing trapezoid function. } \label{appfig:grid cell}
\end{center}
\end{figure}
We construct the increasing trapezoid function by induction on the subdivisions. For any $m \in [0,M]$, we construct a ResNet $R_m$ such that 
\begin{enumerate}[label=C\arabic*.]
	\item $R_m = 0$ on $(-\infty,a_0]$.
	\item $R_m$ is a trapezoid function on each $I_k$, for any $k =1, \cdots, m$. 
	\item $R_m = (k+1) \Vert h \Vert_\infty \,\,$ on $\,\, I_k^\delta = [a_{k-1}+ \delta, a_k -\delta]$ for any $k =1, \cdots, m$. 
	\item $R_m$ is bounded on $(-\infty, a_m]$ by $ 0 \leq R_m \leq (m+1) \Vert h \Vert_\infty$.
	\item $R_m(x) = - \frac{(m+1) \Vert h \Vert_\infty}{\delta} (x -a_m)$ if  $x \in [a_m, +\infty)$.
\end{enumerate}
When $m=0$, we start with the identity function and sequentially build 
\begin{itemize}
	\item $R^+ = \max\{x, a_0\}= x +[a_0-x]_+ $. (Cut off $x \leq a_0$)
	\item $R^{++} = R^{+} - a_0$. (Shifting)
	\item $R_0 = R^{++} -  \frac{(\Vert h \Vert_\infty+\delta)}{\delta} [R^{++}]_+$.
\end{itemize}
We provide a geometric interpretation in Figure~\ref{appfig:R0} and it is easy to see that C1-C5 holds.

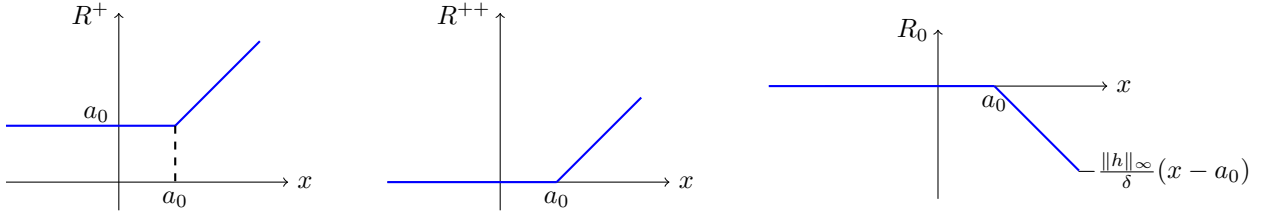
\begin{figure}[H]
\begin{center}
\begin{minipage}{.3\textwidth}
\begin{tikzpicture} [scale=0.75]  
      \draw[->] (-2,0) -- (3,0) node[right] {$x$};
      \draw[->] (0,-0.5) -- (0,3) node[left] {$R^+$};
      \draw[domain=-2:1,smooth,variable=\x,blue,thick] plot ({\x},{1});
      \draw[domain=1:2.5,smooth,variable=\x,blue,thick] plot ({\x},{(\x)});
      \draw (1,0) node [below] {$a_0$};
      \draw (0,1.2) node [left] {$a_0$};
      \draw [dashed, thick, black] (1,1) -- (1,0);
\end{tikzpicture}
\end{minipage}
\begin{minipage}{.3\textwidth}
\begin{tikzpicture} [scale=0.75]  
      \draw[->] (-2,0) -- (3,0) node[right] {$x$};
      \draw[->] (0,-0.5) -- (0,3) node[left] {$R^{++}$};
      \draw[domain=-2:1,smooth,variable=\x,blue,thick] plot ({\x},{0});
      \draw[domain=1:2.5,smooth,variable=\x,blue,thick] plot ({\x},{(\x-1)});
      \draw (1,0) node [below] {$a_0$};
\end{tikzpicture}
\end{minipage}
\begin{minipage}{.3\textwidth}
\begin{tikzpicture} [scale=0.75]  
      \draw[->] (-2,0) -- (3,0) node[right] {$x$};
      \draw[->] (0,-2) -- (0,1) node[left] {$R_0$};
      \draw[domain=-3:1,smooth,variable=\x,blue,thick] plot ({\x},{0});
      \draw[domain=1:2.5,smooth,variable=\x,blue,thick] plot ({\x},{-(\x-1)});
      \draw (1,0) node [below] {$a_0$};
      \draw (4,-1) node [below] {$-\frac{\Vert h \Vert_\infty}{\delta} (x-a_0)$};
\end{tikzpicture}
\end{minipage}
\caption{A geometric construction of the initialization $R_{0}$.}\label{appfig:R0}
\end{center}
\end{figure}
Now we proceed by induction. Assume that $R_m$ is constructed, we will stack more modules of one-neuron residual blocks on top of $R_m$ to build $R_{m+1}$. More precisely, we use $R_m$ as input and sequentially perform 
\begin{enumerate}[label=(\alph*)]
	\item $R_m^+ = R_m + \left (2+ \frac{1}{m+1} \right ) [-R_m]_+ $. 
	\item $R_m^{++} = R_m^+ -2 \left [R_m^+ - {(m+2)\Vert h \Vert_\infty} \frac{a_{m+1}-a_m}{2\delta} \right ]_+$.
	\item $R_{m+1} = \min \{ R_m^{++}, (m+2)\Vert h \Vert_\infty \}$.
\end{enumerate}
A geometric interpretation of the construction is shown in Figure~\ref{appfig:Rm}. 

\begin{figure}[H]
\hspace*{-1cm}
\begin{center}
\begin{minipage}{.22\textwidth}
\begin{tikzpicture} [scale=0.5]  
      \draw[->] (-3,0) -- (2,0) node[right] {$x$};
      \draw[->] (-2.5,-1.5) -- (-2.5,2)node[right] {$R_m$};
      \draw[domain=-3:-2,smooth,variable=\x,blue,thick] plot ({\x},{0});
      \draw[domain=-2:-1.7,smooth,variable=\x,blue,thick] plot ({\x},{2*(\x+2)});
      \draw[domain=-1.7:-1.1,smooth,variable=\x,blue,thick] plot ({\x},{0.6});
      \draw[domain=-1.1:-0.8,smooth,variable=\x,blue,thick] plot ({\x},{-2*(\x+0.8)});
      \draw[domain=-0.8: -0.5,smooth,variable=\x,blue,thick] plot ({\x},{3*(\x+0.8)});
	  \draw[domain=-0.5:0,smooth,variable=\x,blue,thick] plot ({\x},{0.9});
  	  \draw[domain=0:0.8,smooth,variable=\x,blue,thick] plot ({\x},{-3*(\x-0.3)});
      \draw (-2,0) node [below] {$a_0$};
      \draw (0.2,0) node [below] {$a_m$};
      \draw (2,2.5) node [below] { \circled{\tiny 1} };
\end{tikzpicture}
\end{minipage}
~~~
\begin{minipage}{.22\textwidth}
\begin{tikzpicture} [scale=0.5]  
      \draw[->] (-3,0) -- (2,0) node[right] {$x$};
      \draw[->] (-2.5,-0.5) -- (-2.5,3)node[right] {$R_m^+$};
      \draw[domain=-3:-2,smooth,variable=\x,blue,thick] plot ({\x},{0});
      \draw[domain=-2:-1.7,smooth,variable=\x,blue,thick] plot ({\x},{2*(\x+2)});
      \draw[domain=-1.7:-1.1,smooth,variable=\x,blue,thick] plot ({\x},{0.6});
      \draw[domain=-1.1:-0.8,smooth,variable=\x,blue,thick] plot ({\x},{-2*(\x+0.8)});
      \draw[domain=-0.8: -0.5,smooth,variable=\x,blue,thick] plot ({\x},{3*(\x+0.8)});
	  \draw[domain=-0.5:0,smooth,variable=\x,blue,thick] plot ({\x},{0.9});
  	  \draw[domain=-0:0.3,smooth,variable=\x,blue,thick] plot ({\x},{-3*(\x-0.3)});
      \draw[domain=0.3:0.8,smooth,variable=\x,red,thick] plot ({\x},{4*(\x-0.3)});
      \draw (-2,0) node [below] {$a_0$};
      \draw (0.3,0) node [below] {$a_m$};
      \draw (2,3.5) node [below] { \circled{\tiny 2} };
\end{tikzpicture}
\end{minipage}
~~~
\begin{minipage}{.22\textwidth}
\begin{tikzpicture} [scale=0.5] 
      \draw[->] (-3,0) -- (2.3,0) node[right] {$x$};
      \draw[->] (-2.5,-0.5) -- (-2.5,3) node[right] {$R_{m}^{++}$};
      \draw[domain=-3:-2,smooth,variable=\x,blue,thick] plot ({\x},{0});
      \draw[domain=-2:-1.7,smooth,variable=\x,blue,thick] plot ({\x},{2*(\x+2)});
      \draw[domain=-1.7:-1.1,smooth,variable=\x,blue,thick] plot ({\x},{0.6});
      \draw[domain=-1.1:-0.8,smooth,variable=\x,blue,thick] plot ({\x},{-2*(\x+0.8)});
      \draw[domain=-0.8: -0.5,smooth,variable=\x,blue,thick] plot ({\x},{3*(\x+0.8)});
	  \draw[domain=-0.5:0,smooth,variable=\x,blue,thick] plot ({\x},{0.9});
  	  \draw[domain=-0:0.3,smooth,variable=\x,blue,thick] plot ({\x},{-3*(\x-0.3)});
      \draw[domain=0.3:1.05,smooth,variable=\x,red,thick] plot ({\x},{4*(\x-0.3)});
      \draw[domain=1.05:2,smooth,variable=\x,red,thick] plot ({\x},{-4*(\x-1.8)});
      \draw (0.3,0) node [below] {$a_m$};
      \draw (1.8,0) node [below] {$a_{m+1}$};
      \draw (2.5,3.5) node [below] { \circled{\tiny 3} };
\end{tikzpicture}
\end{minipage}
~~~
\begin{minipage}{.22\textwidth}
\begin{tikzpicture}[scale=0.5] 
      \draw[->] (-3,0) -- (2.3,0) node[right] {$x$};
      \draw[->] (-2.5,-0.5) -- (-2.5,3)node[right] {$R_{m+1}$};
      \draw[domain=-3:-2,smooth,variable=\x,blue,thick] plot ({\x},{0});
      \draw[domain=-2:-1.7,smooth,variable=\x,blue,thick] plot ({\x},{2*(\x+2)});
      \draw[domain=-1.7:-1.1,smooth,variable=\x,blue,thick] plot ({\x},{0.6});
      \draw[domain=-1.1:-0.8,smooth,variable=\x,blue,thick] plot ({\x},{-2*(\x+0.8)});
      \draw[domain=-0.8: -0.5,smooth,variable=\x,blue,thick] plot ({\x},{3*(\x+0.8)});
	  \draw[domain=-0.5:0,smooth,variable=\x,blue,thick] plot ({\x},{0.9});
  	  \draw[domain=-0:0.3,smooth,variable=\x,blue,thick] plot ({\x},{-3*(\x-0.3)});
      \draw[domain=0.3:0.6,smooth,variable=\x,red,thick] plot ({\x},{4*(\x-0.3)});
      \draw[domain=0.6:1.5,smooth,variable=\x,red,thick] plot ({\x},{1.2});
      \draw[domain=1.5:2,smooth,variable=\x,red,thick] plot ({\x},{-4*(\x-1.8)});
      \draw (0.3,0) node [below] {$a_m$};
      \draw (1.8,0) node [below] {$a_{m+1}$};
      \draw (2,3.5) node [below] { \circled{\tiny 4} };
\end{tikzpicture}
\end{minipage}
\caption{A geometric construction of $R_{m+1}$ based on $R_m$. We build the next trapezoid function (red) and keep the previous ones (blue) unchanged. }\label{appfig:Rm}
\end{center}
\vspace*{-0.3cm}
\end{figure}
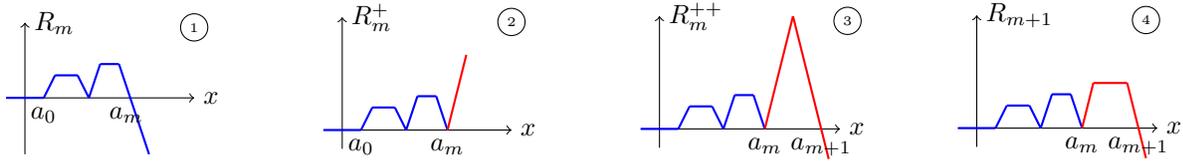

The first operation flips the linear part on $[a_m, +\infty)$ by adjusting the slope. By induction, $R_m$ is positive on $(-\infty, a_m]$ and it is a negative linear function on $[a_m, +\infty)$. Thus,
\begin{equation*}
	R_m^+  =  
	\begin{cases*}
	      R_m & if $x < a_m$, \\
		  \frac{(m+2)\Vert h \Vert_\infty}{\delta} (x-a_m) & if $x \in [a_m, +\infty)$. 
  	\end{cases*} 
\end{equation*}
The second operation folds the linear function in the middle of $[a_m,a_{m+1}]$. We show that the ReLU function is active if and only if $x\geq \frac{a_m+a_{m+1}}{2}$. 
\begin{itemize}
	\item When~$x < a_m$, $R_m^{+} = R_m$, then by C4 
\begin{equation*}
	R_m^+ = R_m \leq (m+1) \Vert h \Vert_\infty < {(m+2)\Vert h \Vert_\infty} \frac{a_{m+1}-a_m}{2\delta}. \quad (\text{we used the fact } 2\delta < a_{m+1}-a_m)
\end{equation*}
Thus the $[\,\,\,\,]_+$ in the update (b) of $R^{++}$  is not active on $x < a_m$, meaning $R_m^{++}(x) = R_m^+(x) = R_m(x)$ when~$x < a_m$.
\item When $x \geq a_m$, $R_m^+$ is a linear function with positive slope, which is increasing. Therefore, the $[\,\,\,\,]_+$ in the update (b) is active only when $x \geq \frac{a_m+a_{m+1}}{2}$.
\end{itemize}
As a result, we have 
\begin{equation*}
	R_m^{++}  =  
	\begin{cases*}
	      R_m & if $x < a_m$, \\
		  \frac{(m+2)\Vert h \Vert_\infty}{\delta} (x-a_m) & if $x \in [a_m, \frac{a_m+a_{m+1}}{2})$. \\ 
		  -\frac{(m+2)\Vert h \Vert_\infty}{\delta} (x-a_{m+1}) & if $x \in [\frac{a_m+a_{m+1}}{2}, +\infty )$. \\ 
  	\end{cases*} 
\end{equation*}
Finally, we cut off the peak of $R_m^{++}$ at the appropriate level $(m+2) \Vert h \Vert_\infty$ which yields $R_{m+1}$. We deduce the following expression of $R_{m+1}$:
\begin{equation*}
	R_{m+1}  =  
	\begin{cases*}
	      R_m & if $x < a_m$, \\
		  \frac{(m+2)\Vert h \Vert_\infty}{\delta} (x-a_m) & if $x \in [a_m, a_m+\delta )$, \\ 
		  (m+2)\Vert h \Vert_\infty & if $x \in [a_m+\delta, a_{m+1}-\delta)$.\\
		  -\frac{(m+2)\Vert h \Vert_\infty}{\delta} (x-a_{m+1}) & if $x \in [a_{m+1}-\delta, +\infty)$. \\ 
  	\end{cases*} 
\end{equation*}
It is then easy to check conditions C1-C5 holds, which enrolls the induction.

Before moving on, we remark that $R_M$ goes to $-\infty$ as $x \to \infty$. This negative ``tail'' is easily removed by performing a cut-off operation via the max operator:
\begin{displaymath}
	R_M^* = \max \{R_M,0\},
\end{displaymath}
which sets all the negative values to zero. This gives us the desired increasing trapezoid function $R_M^*$. One of the main properties of the increasing trapezoid function is that $R_M^*$ takes different value on different $I_k^\delta$. This allows us to adjust the function value of different level sets separately. More concretely, we define level sets $L_k$ by
\begin{equation}
	L_k =  \{ x \, | \, k\Vert h \Vert_\infty < R_M^* \leq (k+1)\Vert h \Vert_\infty \} \quad \text{for any } k=0,\cdots, M.
\end{equation}
It is easy to see that $I_k^\delta \subset L_k$ for any $k \geq 1$. The main idea is to sequentially adjust the function value on different level sets $L_k$. We start adjusting the top level set $L_M$ by performing 
\begin{equation}
	R^*_{M-1}  = R_M^* + \frac{h_M - (M+1)\Vert h \Vert_\infty}{\Vert h \Vert_\infty} [R_M^* - M\Vert h \Vert_\infty]_+. 
\end{equation}
The ReLU activation $[R_M^* - M\Vert h \Vert_\infty]_+$ is active if and only if $x \in L_M$, which means the function values on other level sets are unchanged. Moreover, when $x \in I_M^\delta$, $R_M^* =(M+1)\Vert h \Vert_\infty$ which immediately implies $R_{M-1}^*(x) = h_M$. As a result, we have 
\begin{equation*}
R_{M-1}^*=
\begin{cases*}
  h_M & if $x \in I_M^\delta$; \\
  R_M^*    & if $x \notin [a_{M-1},a_M]$.
\end{cases*}
\end{equation*}
Then we adjust the next level set, and so on.
\begin{figure}[H]
\begin{center}
\hspace*{0.7cm}
\begin{tikzpicture}  [scale=0.8]
      \draw[->] (-3,0) -- (7,0) node[right] {$x$};
      \draw[->] (-2.5,-0.5) -- (-2.5,3);
      \draw[domain=-3:-2,smooth,variable=\x,blue,thick] plot ({\x},{0});
      \draw[domain=-2:-1.7,smooth,variable=\x,blue,thick] plot ({\x},{2*(\x+2)});
      \draw[domain=-1.7:-1.1,smooth,variable=\x,blue,thick] plot ({\x},{0.6});
      \draw[domain=-1.1:-0.8,smooth,variable=\x,blue,thick] plot ({\x},{-2*(\x+0.8)});
      \draw[domain=-0.8: -0.5,smooth,variable=\x,blue,thick] plot ({\x},{3*(\x+0.8)});
	  \draw[domain=-0.5:0,smooth,variable=\x,blue,thick] plot ({\x},{0.9});
  	  \draw[domain=-0:0.3,smooth,variable=\x,blue,thick] plot ({\x},{-3*(\x-0.3)});
      \draw[domain=0.3:0.6,smooth,variable=\x,blue,thick] plot ({\x},{4*(\x-0.3)});
      \draw[domain=0.6:1.5,smooth,variable=\x,blue,thick] plot ({\x},{1.2});
      \draw[domain=1.5:1.8,smooth,variable=\x,blue,thick] plot ({\x},{-4*(\x-1.8)});
      \draw[domain=1.8:2.1,smooth,variable=\x,blue,thick] plot ({\x},{5*(\x-1.8)});
      \draw (2.5,1) node[right] {$\color{blue} \bm \cdots$};
      \draw[domain=3.78:4,smooth,variable=\x,blue,thick] plot ({\x},{-7*(\x-4)});
      \draw[domain=4:4.2625,smooth,variable=\x,blue,thick] plot ({\x},{8*(\x-4)});
      \draw[domain=4.2625:4.3,smooth,variable=\x,red,line width=0.5mm] plot ({\x},{-64*(\x-4.3)- 0.3});
      \draw[domain=4.3:5,smooth,variable=\x,red,line width=0.5mm] plot ({\x},{-0.3});
      \draw[domain=5:5.0375,smooth,variable=\x,red,line width=0.5mm] plot ({\x},{64*(\x-5)- 0.3});
      \draw[domain=5.0375:5.3,smooth,variable=\x,blue,thick] plot ({\x},{-8*(\x-5.3)});
      \draw[domain=5.3:6.5,smooth,variable=\x,blue,thick] plot ({\x},{0});
	  \draw[domain=4.2625:4.3,smooth,variable=\x,red,dashed,line width=0.5mm] plot ({\x},{8*(\x-4)});
      \draw[domain=4.3:5,smooth,variable=\x,red,dashed,line width=0.5mm] plot ({\x},{2.4});
      \draw[domain=5:5.0375,smooth,variable=\x,red,dashed,line width=0.5mm] plot ({\x},{-8*(\x-5.3)});

      \draw (-2,0) node [below] {$a_0$};
      \draw (-0.8,0) node [below] {$a_1$};
      \draw (0.3,0) node [below] {$a_2$};
      \draw (1.8,0) node [below] {$a_3$};
      \draw (3.8,0) node [below] {$a_{M-1}$};
      \draw (5.3,0) node [below] {$a_M$};
      
      \draw [dashed, thick, black] (-2.5,2.1) -- (5,2.1);
      \draw (-2.5,2.1) node [left] {$M\Vert h \Vert_\infty$};
      \draw (5.3,2.3) node (1){};
      \draw (5.3,1.5) node (2){};
      \draw [->,red,line width=0.5mm] (1.south) to [out=-60,in=60] (2.north);
\end{tikzpicture}

\begin{tikzpicture} [scale =0.8] 
      \draw[->] (-3,0) -- (7,0) node[right] {$x$};
      \draw[->] (-2.5,-0.5) -- (-2.5,3);
      \draw[domain=-3:-2,smooth,variable=\x,blue,thick] plot ({\x},{0});
      \draw[domain=-2:-1.7,smooth,variable=\x,blue,thick] plot ({\x},{2*(\x+2)});
      \draw[domain=-1.7:-1.1,smooth,variable=\x,blue,thick] plot ({\x},{0.6});
      \draw[domain=-1.1:-0.8,smooth,variable=\x,blue,thick] plot ({\x},{-2*(\x+0.8)});
      \draw[domain=-0.8: -0.5,smooth,variable=\x,blue,thick] plot ({\x},{3*(\x+0.8)});
	  \draw[domain=-0.5:0,smooth,variable=\x,blue,thick] plot ({\x},{0.9});
  	  \draw[domain=-0:0.3,smooth,variable=\x,blue,thick] plot ({\x},{-3*(\x-0.3)});
      \draw[domain=0.3:0.6,smooth,variable=\x,blue,thick] plot ({\x},{4*(\x-0.3)});
      \draw (0.7,1) node[right] {$\color{blue} \bm \cdots$};
      \draw[domain=1.5:1.8,smooth,variable=\x,blue,thick] plot ({\x},{-4*(\x-1.8)});
      \draw[domain=1.8:2.057,smooth,variable=\x,blue,thick] plot ({\x},{7*(\x-1.8)});
      \draw[domain=2.057:2.1,smooth,variable=\x,red,line width=0.5mm] plot ({\x},{-35*(\x-2.1)+0.3});
      \draw[domain=2.1:3.7,smooth,variable=\x,red,line width=0.5mm] plot ({\x},{0.3});
      \draw[domain=3.7:3.743,smooth,variable=\x,red,line width=0.5mm] plot ({\x},{35*(\x-3.7)+0.3});
      \draw[domain=3.743:4,smooth,variable=\x,blue,thick] plot ({\x},{-7*(\x-4)});
      \draw[domain=4:4.225,smooth,variable=\x,blue,thick] plot ({\x},{8*(\x-4)});
      \draw[domain=4.225:4.2625,smooth,variable=\x,red] plot ({\x},{-40*(\x-4.2625)+0.3});
      \draw[domain=4.2625:4.2675,smooth,variable=\x,red] plot ({\x},{320*(\x-4.2625)+0.3});
      \draw[domain=4.2675:4.3,smooth,variable=\x,blue,thick] plot ({\x},{-64*(\x-4.3)- 0.3});
      \draw[domain=4.3:5,smooth,variable=\x,blue,thick] plot ({\x},{-0.3});
      \draw[domain=5:5.0325,smooth,variable=\x,blue,thick] plot ({\x},{64*(\x-5)- 0.3});
      \draw[domain=5.03265:5.0375,smooth,variable=\x,red] plot ({\x},{-320*(\x-5.0375)+0.3});
      \draw[domain=5.0375:5.075,smooth,variable=\x,red] plot ({\x},{40*(\x-5.0375)+0.3});
      \draw[domain=5.075:5.3,smooth,variable=\x,blue,thick] plot ({\x},{-8*(\x-5.3)});
      \draw[domain=5.3:6.5,smooth,variable=\x,blue,thick] plot ({\x},{0});
	  \draw[domain=2.057:2.1,smooth,variable=\x,red,dashed,line width=0.5mm] plot ({\x},{7*(\x-1.8)});
      \draw[domain=2.1:3.7,smooth,variable=\x,red,dashed,line width=0.5mm] plot ({\x},{2.1});
      \draw[domain=3.7:3.743,smooth,variable=\x,red,dashed,line width=0.5mm] plot ({\x},{-7*(\x-4)});
      
      \draw[domain=4.225:4.2625,smooth,variable=\x,red,line width=0.5mm, dotted] plot ({\x},{8*(\x-4)});
      \draw[domain=4.2625:4.2675,smooth,variable=\x,red,line width=0.5mm, dotted] plot ({\x},{-64*(\x-4.3)- 0.3});
      \draw[domain=5.0325:5.0375,smooth,variable=\x,red,line width=0.5mm, dotted] plot ({\x},{64*(\x-5)- 0.3});
      \draw[domain=5.0375:5.075,smooth,variable=\x,red,line width=0.5mm, dotted] plot ({\x},{-8*(\x-5.3)});

      \draw (-2,0) node [below] {$a_0$};
      \draw (-0.8,0) node [below] {$a_1$};
      \draw (0.3,0) node [below] {$a_2$};
      \draw (1.8,0) node [below] {$a_{M-2}$};
      \draw (3.8,0) node [below] {$a_{M-1}$};
      \draw (5.3,0) node [below] {$a_M$};
      
      \draw [dashed, thick, black] (-2.5,1.8) -- (5,1.8);
      \draw (-2.5,2.1) node [left] {$(M-1)\Vert h \Vert_\infty$};
      \draw (5.4,2.1) node (1){};
      \draw (5.4,1.2) node (2){};
      \draw [->,red,line width=0.5mm] (1.south) to [out=-60,in=60] (2.north);

\end{tikzpicture}

\caption{A geometric illustration of the function adjustment procedure applied to the top level sets. } \label{appfig:rescale}
\end{center}
\end{figure}
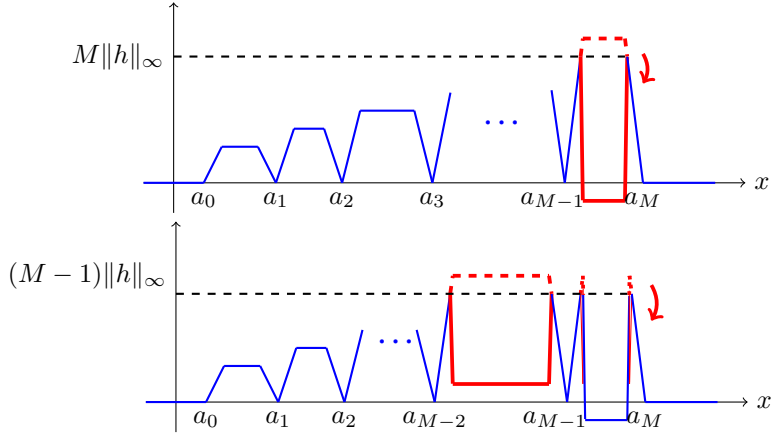
More formally, for any $k=M, \cdots , 1$, we sequentially construct
\begin{equation}\label{eq:R_k*}
	R^*_{k-1}  = R^*_{k} + \frac{h_{k} - (k+1)\Vert h \Vert_\infty}{\Vert h \Vert_\infty} [R^*_{k} - k\Vert h \Vert_\infty]_+. 
\end{equation}
The $k$-th subdivision $I_k^\delta$ is set to value $h_k$ by moving from $R_{k}^*$ to $R_{k-1}^*$. We show by induction that $R_k^*$ satisfies 
\begin{enumerate}[label=(\alph*)]
	\item $R_k^*= 0$ on $(-\infty,a_0]$ and $[a_M, +\infty)$.
	\item $R_k^* = h_j \,\,$ on $\,\, I_j^\delta$ for any $j =M, \cdots, k+1$. 
	\item $R_k^* = (j+1)\Vert h \Vert_\infty \,\,$ on $\,\, I_j^\delta$ for any $j =k, \cdots, 1$.
	\item $R_k^*$ is bounded with $-\Vert h \Vert_\infty \leq R_k^* \leq (k+1)\Vert h \Vert_\infty$. 
\end{enumerate}
It is clear that $R_M^*$ satisfies these properties. Assume that they are valid for $R_k^*$, then from (\ref{eq:R_k*}), 
\begin{displaymath}
	R_{k-1}^* = R_k^* \quad \text{when} \quad R_k^* \leq k\Vert h \Vert_\infty.
\end{displaymath}
In particular, remarking that $0 \leq k \Vert h \Vert_\infty$ and $h_j \leq \Vert h \Vert_\infty \leq k \Vert h \Vert_\infty$ for any $j=M, \cdots k+1$. We have 
\begin{displaymath}
	R_{k-1}^* = R_k^* \quad \text{when} \quad x \in (-\infty,a_0] \cup [a_M, +\infty) \cup I_M^\delta \cdots \cup I_{k+1}^\delta \cup  I_{k-1}^\delta \cup \cdots \cup I_{1}^\delta.
\end{displaymath}
This implies $R_{k-1}^*$ satisfies (a) and (c). To show (b), it remains to show $R_{k-1}^* = h_k$ on $I_k^\delta$, which is a direct consequence of (\ref{eq:R_k*}). Finally, (d) holds by remarking that
\begin{displaymath}
R_k^* \in [k\Vert h \Vert_\infty, (k+1)\Vert h \Vert_\infty] \implies R_{k-1}^* \in [-\Vert h \Vert_\infty, k\Vert h \Vert_\infty ].	
\end{displaymath}
This completes the induction. Therefore the last function $R_0^*$ is the desired approximation of $h$. More precisely, we have shown that 
\begin{itemize}
	\item $R_0^*= 0$ on $(-\infty,a_0]$ and $[a_M, +\infty)$.
	\item $R_0^* = h_k \,\,$ on $\,\, I_k^\delta = [a_{k-1}+ \delta, a_k -\delta]$ for any $k =1, \cdots, M$. 
	\item $R_0^*$ is bounded with $-\Vert h \Vert_\infty \leq R_0^* \leq \Vert h \Vert_\infty$. 
\end{itemize}
As a result, we can easily bound 
\begin{equation*}
	\int_\R |R_0^*(x)- h(x)| dx \leq 4M\delta \Vert h \Vert_\infty, 
\end{equation*}
which can be made arbitrarily small by choosing an appropriate $\delta$. This completes the proof.
\end{proof}

\begin{remark}
	The only property of the increasing trapezoid function that we have used in the proof is the property of separate level sets. The increasing function value is an artifact that facilitates the sequential construction. 
	
	However, the concept of monotonicity does not generalize in high dimensions. Instead, we are going to introduce a notion called grid indicator function. 
\end{remark}
\begin{defi}
In $d$ dimension space, a hypercube is the Cartesian product of $d$ bounded intervals, i.e. 
\[ I = [a^1,b^1) \times [a^2,b^2) \times \cdots \times  [a^d,b^d).\]
For small enough $\delta$, we denote $I^\delta$ as the $\delta$-interior of $I$, namely
\[ I^\delta = [a^1+\delta,b^1-\delta) \times [a^2+\delta,b^2-\delta) \times \cdots \times  [a^d+\delta,b^d-\delta).\]
\end{defi}

\begin{defi}\label{defi:indicator}
	We say a function $g:\R^d \rightarrow \R$ is a {\bf grid indicator function} if there exists $M$ disjoint hypercubes $(I_k)_{k=1,..,M}$ such that 
	\begin{itemize}
		\item $g(x) =0$ if $x \notin \cup_{k=1}^M I_k$.
		\item $g(x) = g_k$ if $x \in I_k^\delta$, for any $k=1,..M$.  
		\item $g_i \neq g_j$ if $i \neq j$.
	\end{itemize}
	In other words, $g$ can be viewed as an approximation of the indicator function, which in addition takes different function value on different hypercubes. For instance, the increasing trapezoid function is a grid indicator function when $d=1$.
\end{defi}

\section{Extension to high dimension}
We extend our proof to high dimensions by following the same path as our one dimensional construction. We first construct a high dimensional grid indicator function and then adjust the function value on each grid cell one after another. It is worth remarking that this last step of function adjustment is performed by sliding through all the grid cells and adjusting the function value sequentially, which can be done regardless of the dimension. Therefore, the main effort is to build the high dimensional grid indicator function, which enjoys the separate level set property.  

Given a piecewise constant function $h:\R^d \rightarrow \R$ (following the Definition~\ref{def:piecewise}), it can be represented as 
\begin{equation*}
	h(x) = \sum_{k=1}^{M_{1:d}} h_k \mathds{1}_{x \in I_k},
\end{equation*}
where $M_{1:d} = \prod_{i=1}^d M_i$ denotes the total number of hypercubes and each $I_k$ is a $d$-dimensional hypercube of the form
\begin{displaymath}
	I_k = [a^1_{i_1-1}, a^1_{i_1}) \times [a^2_{i_2-1}, a^2_{i_2}) \times \cdots \times [a^{d}_{i_{d}-1}, a^{d}_{i_{d}})
\end{displaymath}
for some $i_1 \in [1,M_1]$, $i_2 \in [1,M_2]$, $\cdots$, $i_d \in [1,M_d]$. Moreover, we denote 
\begin{displaymath}
	I = \cup_{k=1}^{M_{1:d}} I_k =[a^1_0,a^1_{M_1}) \times [a^2_0,a^2_{M_2}) \times \cdots \times [a^{d}_0,a^{d}_{M_d}), 
\end{displaymath}
as the entire support of $h$.


\begin{prop}\label{prop:high dim}
Given a piecewise constant function $h:\R^d \rightarrow \R$, for any small enough $\delta>0$, there exists a ResNet $R$ with one neuron per hidden layer such that 
\begin{itemize}
	\item  $R(x) =0$ if $x\notin I$.
	\item  $R(x)=h_k$ for $x \in I_k^\delta$, which is the $\delta$-interior of the $k$-th grid cell $I_k$.
	\item $R$ is bounded with $-\Vert h \Vert_\infty \leq R \leq \Vert h \Vert_\infty$.
\end{itemize}
\end{prop}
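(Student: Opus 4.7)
The plan is to closely mirror the two-step strategy of the one-dimensional proof. First, I would construct a $d$-dimensional grid indicator $g$ in the sense of Definition~\ref{defi:indicator} whose level sets separate the cells $I_k^\delta$ and whose support is contained in $I$; second, I would adjust the value on each cell sequentially, using ReLU-based corrections of the form (\ref{eq:R_k*}).

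The adjustment step is essentially dimension-free. Once the grid indicator $g$ takes pairwise distinct constant values $g_1,\ldots,g_{M_{1:d}}$ on the corresponding $\delta$-interiors, I would order the values, pick thresholds $t_k$ strictly separating consecutive ones, and iteratively apply
\[
R^* \leftarrow R^* + \lambda_k \, [R^* - t_k]_+
\]
with $\lambda_k$ tuned so that the cell currently valued $g_k$ is sent to $h_k$ without disturbing any other cell. The bookkeeping, including the bound $-\Vert h\Vert_\infty \leq R \leq \Vert h\Vert_\infty$, is identical to the end of Proposition~\ref{prop:d=1}, since only the scalar nature of $g$ is used.

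Building the grid indicator is the core of the argument, and I would proceed by induction on $d$. The base case $d=1$ is the increasing trapezoid $R_M^*$ already constructed. For the inductive step, assume a ResNet $g_{d-1}(x^1,\ldots,x^{d-1})$ is available with pairwise distinct plateau values $v_1,\ldots,v_N \in [0,C]$ on each $(d-1)$-dim cell's $\delta$-interior, vanishing outside the $(d-1)$-dim support $[a^1_0,a^1_{M_1}]\times\cdots\times[a^{d-1}_0,a^{d-1}_{M_{d-1}}]$. Using only the $x^d$ coordinate, I would build a 1D increasing trapezoid $\phi(x^d)$ in the residual stream that takes value $j\cdot(2C+1)$ on the interior of the $j$-th $x^d$-subdivision and $0$ outside $[a^d_0,a^d_{M_d}]$. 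The wide amplitude gap makes the sums $v_k + j(2C+1)$ pairwise distinct across all pairs $(k,j)$, so on the $\delta$-interiors the intended indicator is morally $g_d = g_{d-1} + \phi$.

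The main obstacle is enforcing $g_d \equiv 0$ outside $I$: outside the full hyperrectangle it may happen that $\phi(x^d)=0$ while $g_{d-1}\neq 0$, or vice versa, so a naive sum fails the support condition. I would resolve this by a gating construction. After building $\phi$ first, I would build $g_{d-1}$ \emph{conditionally} by masking each of its ReLU updates through $\phi$: every elementary step $R^+ = R + [\alpha R + \beta]_+$ arising in the inductive construction is replaced by $R^+ = R + [\min\{\alpha R + \beta,\; K\phi\}]_+$ for $K$ large enough. Inside any $x^d$-slice where $\phi$ has reached its plateau, $K\phi$ exceeds $\alpha R + \beta$ and the gated update coincides with the ungated one; outside the $x^d$-support, $\phi=0$ forces the update to contribute $0$. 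Each gated step decomposes into a handful of the basic operations of the opening proposition. With $\delta$ small and the constants $C,K$ tuned so that the distinct-value property survives and the residual stream stays bounded, the resulting $g_d$ is the desired $d$-dimensional grid indicator, and the adjustment scheme of the second paragraph completes the proof.
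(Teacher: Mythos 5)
Your overall strategy (build a $d$-dimensional grid indicator by induction, then adjust cell values sequentially) matches the paper, and your treatment of the adjustment step is fine. The gap is in the gating construction for the support condition. You correctly identify both failure modes of the naive sum $g_d = g_{d-1}+\phi$, but your fix only addresses one of them. On the region where $x^d$ lies \emph{inside} its support but $(x^1,\dots,x^{d-1})$ lies \emph{outside} the $(d-1)$-dimensional support, the gate is open ($\phi$ is at a plateau $j(2C+1)$) while $g_{d-1}=0$, so $g_d = j(2C+1) \neq 0$. Worse, these spurious values $j(2C+1)$ interleave with the genuine cell values $v_k + j(2C+1)$, so no threshold cut can remove them afterwards. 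A secondary problem: gating every residual update so that it ``contributes $0$'' when $\phi=0$ does not make $g_{d-1}$ vanish there — it leaves the residual coordinate at its \emph{initial} content, which is the raw input coordinate (the architecture has width $d$ with no spare zero-initialized channel), hence unbounded rather than zero.

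The paper avoids gating entirely with an offset trick that handles both failure modes at once: the one-dimensional trapezoid is built with plateau values $\bigl(M_{2:d}+1+\tfrac{i}{M_1+1}\bigr)\Vert h\Vert_\infty$, i.e.\ offset \emph{above the maximum value of $R_{d-1}$}. Then whenever $x\notin I$ at least one summand vanishes, so the sum is at most $\max\{R_1,R_{d-1}^+\}\leq (M_{2:d}+2)\Vert h\Vert_\infty$, whereas on every genuine cell $I_k^\delta$ the sum exceeds $(M_{2:d}+3)\Vert h\Vert_\infty$. A single $\max\{\cdot,(M_{2:d}+2)\Vert h\Vert_\infty\}$ followed by a constant shift therefore kills everything outside $I$ while preserving distinct values inside. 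If you replace your amplitude gap $2C+1$ (which only guarantees distinctness) by an additive offset of the 1D function above $\sup g_{d-1}$, your argument goes through without any gating.
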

\begin{proof}
We are going to perform an induction on the dimension $d$. The case $d=1$ is true by the analysis in Section~\ref{sec:d=1}. Now assume that it is true for $d-1$, which means we are able to approximate any $d-1$ dimensional piecewise constant function. The key idea is to view a $d$-dimensional hypercube as the product of a one dimensional interval and a $(d-1)$-dimensional hypercube. More precisely, we denote
\begin{align*}
	J_i &= [a^1_{i-1}, a^1_i) \quad \text{for } i = 1 \cdots M_1 ;\\
	K_l & = [a^2_{i_2-1}, a^2_{i_2}) \times \cdots \times [a^{d}_{i_{d}-1}, a^{d}_{i_{d}}) \quad \text{for } i_2 \in [1,M_2], \cdots, i_d \in [1,M_d].
\end{align*}
Therefore each $I_k$ can be represented by $J_i  \times K_l$, for some $i \in [1,M_1]$ and $l \in [1:M_{2:d}]$. We are going to construct a $d-1$ dimensional grid indicator function and a one dimensional network grid indicator function independently.
 
By induction, there exists a $d-1$ dimensional ResNet $R_{d-1}$ such that 
\begin{itemize}
	\item  $R_{d-1}(x_{2:d}) =0$ if $x_{2:d} \notin K = \cup K_l$
	\item  $R_{d-1}(x_{2:d})= (l+1)\Vert h \Vert_\infty $ for $x_{2:d} \in K_l^\delta$.
	\item $R_{d-1}$ is bounded with $-(M_{2:d}+1) \Vert h \Vert_\infty \leq R_{d-1} \leq (M_{2:d}+1)\Vert h \Vert_\infty$.
\end{itemize}
We have abused the notation to use $x_{2:d}$ to denote a $d-1$-dimensional vector. Even though $R_{d-1}$ is $d-1$ dimensional, we can extend it to a $d$ dimensional network by setting the weight of the first coordinate to zero, see Figure~\ref{appfig:d-1}. 

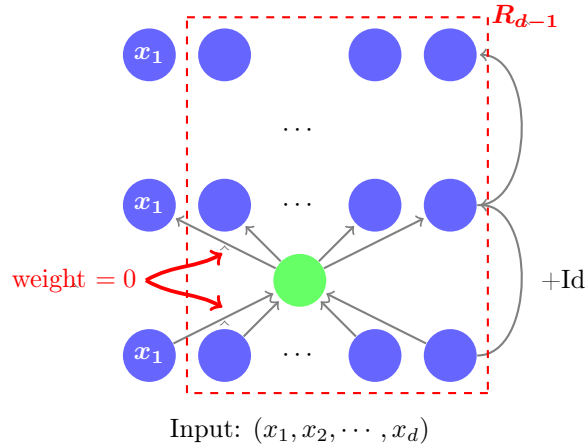
\begin{figure}[H]
\begin{center}
\def\layersep{1 cm}
	\begin{tikzpicture}[shorten >=1pt,->,draw=black!50, node distance=\layersep]
    \tikzstyle{every pin edge}=[<-,shorten <=1pt]
    \tikzstyle{neuron}=[circle,fill=black!25,minimum size=20pt,inner sep=0pt]
    \tikzstyle{input neuron}=[neuron, fill=blue!60];
    \tikzstyle{output neuron}=[neuron, fill=red!60];
    \tikzstyle{hidden neuron}=[neuron, fill=green!60];
    \tikzstyle{annot} = [text width=4em, text centered]

    \foreach \name / \y in {-2,-1,1,2}
        \node[input neuron] (I-\name) at (-\y,0) {};
        \node(Int) at (0,0){$\cdots$};
        \node(x) at (0,-1){Input: $(x_1, x_2, \cdots, x_d)$};

    \foreach \name / \y in {1,...,1}
        \path
            node[hidden neuron] (H-\name) at (0 cm,\layersep) {};
            
    \foreach \name / \y in {-2,-1,1,2}
        \path
            node[input neuron] (H2-\name) at (-\y cm,2*\layersep) {};
     \node(Int2) at (0,2*\layersep){$\cdots$};
     \node(Id) at (3.5,\layersep){+Id};
     
     \node(Int3) at (0,3*\layersep){$\cdots$};
     \foreach \name / \y in {-2,-1,1,2}
        \path
            node[input neuron] (H3-\name) at (-\y cm,4*\layersep) {};
     
    \node(x11) at (-2 cm,0*\layersep){$\color{white} \bm{x_1}$}; 
    \node(Out2) at (-1 cm,0*\layersep){}; 
    
    \node(x12) at (-2 cm,2*\layersep){$\color{white} \bm{x_1}$};
    
    \node(x13) at (-2 cm,4*\layersep){$\color{white} \bm{x_1}$}; 
    \node(Out2) at (-1 cm,4*\layersep){}; 
    \foreach \source in {-2,-1,1,2}
        \foreach \dest in {1,...,1}
            \path (I-\source) edge[thick] (H-\dest);

    \foreach \source in {1,...,1}
        \foreach \dest in {-2,-1,1,2}
            \path (H-\source) edge[thick] (H2-\dest);
            
	\path[every node/.style={font=\sffamily\small}]
	    (I--2) edge[bend right=90, thick] node [left] {} (H2--2);
	\path[every node/.style={font=\sffamily\small}]
	    (H2--2) edge[bend right=90, thick] node [left] {} (H3--2);

	\path[-,dashed, thick, red] (-1.5,-0.5) edge (-1.5,4.5);
	\path[-,dashed, thick, red] (-1.5,4.5) edge (2.5,4.5);
	\path[-,dashed, thick, red] (2.5,4.5) edge (2.5,-0.5);
	\path[-,dashed, thick, red] (2.5,-0.5) edge (-1.5,-0.5);

	\draw (-3,1) node (1){\color{red} weight $=0$};
    \draw (-1,1.5) node (2){};   
    \draw (-1,0.5) node (3){};     
	\draw [->,red,line width=0.5mm] (1.east) to [out=30,in=210] (2.south);    
	\draw [->,red,line width=0.5mm] (1.east) to [out=-30,in=150] (3.north);

	\draw (3,4.5) node{$\color{red} \bm{R_{d-1}}$};    

\end{tikzpicture}
\caption{Extension of a $d-1$ dimensional network $R_{d-1}$ to a $d$ dimension network, where the weight of the first coordinate are set to zero. } \label{appfig:d-1}
\end{center}
\end{figure}
Next, we construct an increasing trapezoid function $R_1$ on the first coordinate $x_1$ such that 
\begin{itemize}
	\item  $R_{1}(x_1) =0$ outside $J = \cup J_i$
	\item  $R_{1}$ is a trapezoid function on each $J_i$, for $i = 1 \cdots M_1$.
	\item  $R_{1}(x_1)= \left (M_{2;d}+1 + \frac{i}{M_1+1} \right )\Vert h \Vert_\infty$  for $x_1 \in J_i^\delta$.
	\item $R_{1}$ is bounded with $0 \leq R_{1} \leq (M_{2:d}+2)\Vert h \Vert_\infty$.
\end{itemize}
We concatenate $R_1$ with $R_{d-1}$ in a dimensional network. This is possible since $R_1$ only operates on the first coordinate while $R_{d-1}$ operates on the last $d-1$ coordinates, see Figure~\ref{appfig:concat}. 

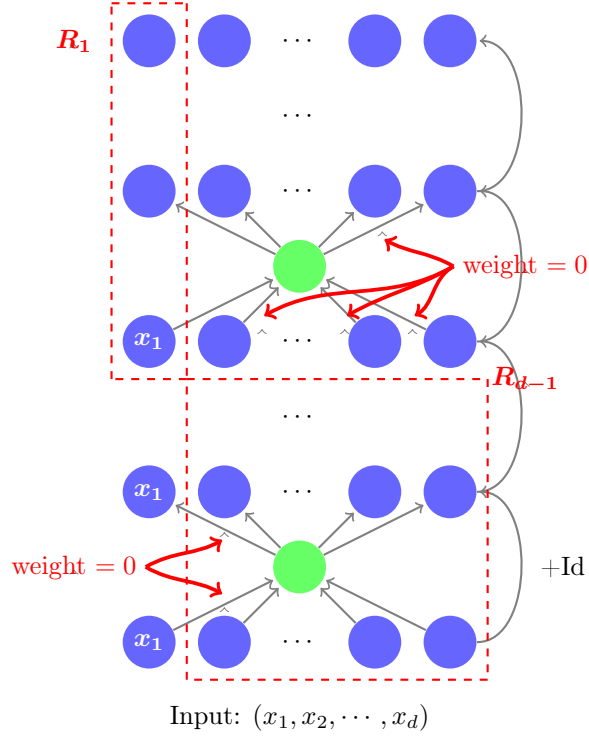
\begin{figure}[H]
\begin{center}
\def\layersep{1 cm}
	\begin{tikzpicture}[shorten >=1pt,->,draw=black!50, node distance=\layersep]
    \tikzstyle{every pin edge}=[<-,shorten <=1pt]
    \tikzstyle{neuron}=[circle,fill=black!25,minimum size=20pt,inner sep=0pt]
    \tikzstyle{input neuron}=[neuron, fill=blue!60];
    \tikzstyle{output neuron}=[neuron, fill=red!60];
    \tikzstyle{hidden neuron}=[neuron, fill=green!60];
    \tikzstyle{annot} = [text width=4em, text centered]

    \foreach \name / \y in {-2,-1,1,2}
        \node[input neuron] (I-\name) at (-\y,0) {};
        \node(Int) at (0,0){$\cdots$};
        \node(x) at (0,-1){Input: $(x_1, x_2, \cdots, x_d)$};

    \foreach \name / \y in {1,...,1}
        \path
            node[hidden neuron] (H-\name) at (0 cm,\layersep) {};
            
    \foreach \name / \y in {-2,-1,1,2}
        \path
            node[input neuron] (H2-\name) at (-\y cm,2*\layersep) {};
     \node(Int2) at (0,2*\layersep){$\cdots$};
     \node(Id) at (3.5,\layersep){+Id};
     
     \node(Int3) at (0,3*\layersep){$\cdots$};
     \foreach \name / \y in {-2,-1,1,2}
        \path
            node[input neuron] (H3-\name) at (-\y cm,4*\layersep) {};
            
     \foreach \name / \y in {-2,-1,1,2}
        \path
            node[input neuron] (H4-\name) at (-\y cm,6*\layersep) {};
     \node(Int4) at (0,4*\layersep){$\cdots$};
     \node(Int5) at (0,6*\layersep){$\cdots$};
     \node(Int6) at (0,7*\layersep){$\cdots$};
     \node(Int7) at (0,8*\layersep){$\cdots$};
            
     \foreach \name / \y in {1,...,1}
        \path
            node[hidden neuron] (G2) at (0 cm,5*\layersep) {};       
            
     \foreach \name / \y in {-2,-1,1,2}
        \path
            node[input neuron] (H5-\name) at (-\y cm,8*\layersep) {};  
            
     \foreach \source in {-2,-1,1,2}
        \foreach \dest in {1,...,1}
            \path (H3-\source) edge[thick] (G2);

    \foreach \source in {1,...,1}
        \foreach \dest in {-2,-1,1,2}
            \path (G2) edge[thick] (H4-\dest);

    \node(x11) at (-2 cm,0*\layersep){$\color{white} \bm{x_1}$}; 
    \node(Out2) at (-1 cm,0*\layersep){}; 
    \node(x12) at (-2 cm,2*\layersep){$\color{white} \bm{x_1}$};
    
    \node(x13) at (-2 cm,4*\layersep){$\color{white} \bm{x_1}$}; 
    \node(Out2) at (-1 cm,4*\layersep){}; 
    \foreach \source in {-2,-1,1,2}
        \foreach \dest in {1,...,1}
            \path (I-\source) edge[thick] (H-\dest);

    \foreach \source in {1,...,1}
        \foreach \dest in {-2,-1,1,2}
            \path (H-\source) edge[thick] (H2-\dest);
            
	\path[every node/.style={font=\sffamily\small}]
	    (I--2) edge[bend right=90, thick] node [left] {} (H2--2);
	\path[every node/.style={font=\sffamily\small}]
	    (H2--2) edge[bend right=90, thick] node [left] {} (H3--2);
	\path[every node/.style={font=\sffamily\small}]
	    (H3--2) edge[bend right=90, thick] node [left] {} (H4--2);
	\path[every node/.style={font=\sffamily\small}]
	    (H4--2) edge[bend right=90, thick] node [left] {} (H5--2);
	 
	\path[-,dashed, thick, red] (-1.5,-0.5) edge (-1.5,3.5);
	\path[-,dashed, thick, red] (-1.5,3.5) edge (2.5,3.5);
	\path[-,dashed, thick, red] (2.5,3.5) edge (2.5,-0.5);
	\path[-,dashed, thick, red] (2.5,-0.5) edge (-1.5,-0.5);   
	
	\path[-,dashed, thick, red] (-2.5,3.5) edge (-2.5,8.5);
	\path[-,dashed, thick, red] (-2.5,8.5) edge (-1.5,8.5);
	\path[-,dashed, thick, red] (-1.5,8.5) edge (-1.5,3.5);
	\path[-,dashed, thick, red] (-1.5,3.5) edge (-2.5,3.5);

	\draw (-3,1) node (1){\color{red} weight $=0$};
    \draw (-1,1.5) node (2){};   
    \draw (-1,0.5) node (3){};     
	\draw [->,red,line width=0.5mm] (1.east) to [out=30,in=210] (2.south);    
	\draw [->,red,line width=0.5mm] (1.east) to [out=-30,in=150] (3.north);  
	
	\draw (3,5) node (4){\color{red}  weight $=0$};
    \draw (1.5,4.2) node (5){};   
    \draw (-0.5,4.2) node (6){};
    \draw (0.6,4.2) node (7){};
    \draw (1.1,5.5) node (8){};     
	\draw [->,red,line width=0.5mm] (4.west) to [out=210,in=30] (5.north);    
	\draw [->,red,line width=0.5mm] (4.west) to [out=210,in=30] (6.north);
	\draw [->,red,line width=0.5mm] (4.west) to [out=210,in=30] (7.north);   
	\draw [->,red,line width=0.5mm] (4.west) to [out=150,in=-30] (8.south); 
	    
	\draw (3,3.5) node{$\color{red} \bm{R_{d-1}}$};
	\draw (-3,8) node{$\color{red} \bm{R_{1}}$};      
\end{tikzpicture}
\caption{Concatenation of $R_1$ and $R_{d-1}$ in a $d$ dimension network.} \label{appfig:concat}
\end{center}
\end{figure}
Thanks to the identity mapping, we can pass the information forward even though the weights are set to zero. Thus, in the last layer of the above network, we get $R_1(x_1)$ in the first neuron and $R_{d-1}(x_{2:d})$ in one of the last $d-1$ neurons. Now we are going to couple these two neurons by summing them up. For technical reasons, we need to ensure the positiveness of $R_{d-1}$, which can be easily obtained by performing a max operator
\begin{displaymath}
	R_{d-1}^+ = \max \{ R_{d-1}, 0\}.
\end{displaymath}
Then we sum up $R_1$ and $R_{d-1}^+$ by performing
\begin{displaymath}
	R_1^+ = R_1 + [R_{d-1}^+]_+ = R_1 + R_{d-1}^+.
\end{displaymath}
We show that a separation level set property holds respect to the $d$-dimensional grid cell $I=\cup_{k=1}^{M_{1:d}} I_k$. More precisely,
\begin{enumerate}[label=(\alph*)]
	\item When $x \notin I$, one of the function $R_1$, $R_{d-1}^+$ vanishes, thus
	\begin{displaymath}
		R_1^+(x) \leq \max \{ R_1, R_{d-1}^+\} \leq (M_{2:d}+2)\Vert h \Vert_\infty.
	\end{displaymath}
	\item When $x \in I_k^\delta = J_i^\delta \times K_l^\delta$, then 
	\begin{align*}
		R_1^+ (x) & = R_1(x_1) + R_{d-1}(x_{2:d}) \\
			      & = \left (M_{2;d}+1 + \frac{i}{M_1+1} \right )\Vert h \Vert_\infty + (l+1)\Vert h \Vert_\infty  \\
			      & > (M_{2:d}+3)\Vert h \Vert_\infty. \quad(\text{since } i\geq 1 \text{ and } l\geq 1).
	\end{align*}
\end{enumerate}
As a result, by performing a ``cut and shift'' operation:
\begin{align*}
	R_1^{++} &= \max \{ R_1^+, (M_{2:d}+2)\Vert h \Vert_\infty \}. \\
	R_1^* = & R_1^{++} - (M_{2:d}+2)\Vert h \Vert_\infty. 
\end{align*}
We have 
\begin{itemize}
	\item $R_1^* = 0$ if $x \notin I$.
	\item $R_1^* = \left ( l+ \frac{i}{M_1+1} \right ) \Vert h \Vert_\infty $ on $J_i^\delta \times K_l^\delta$.
	\item $R_1^*$ is bounded with $0 \leq R_1^* \leq (M_{2:d}+1)\Vert h \Vert_\infty$.
\end{itemize}
In particular, different pairs $(i,l)$ gives different value of $R_1^*$. Therefore $R_1^*$ is a $d$-dimensional grid indicator function of the desired hypercube $I$. Then it suffices to perform the function adjustment procedure on each individual grid cell to obtain the final approximation, as in the one dimensional case. This completes the proof.
\end{proof}

\section{Experimental settings}
In this section, we provide more details of the experimental setting in the unit ball classification problem.
\paragraph{Training set.} The training/testing samples are $2$-dimensional vectors. We say $x$ is a positive sample if $\Vert x \Vert_2 \leq 1$ and $x$ is a negative sample sample if $2 \leq \Vert x \Vert_2 \leq 3$. The training set consists of $10^2$ positive samples and $2 *10^2$ negative samples, being randomly generated.   


\begin{figure}[H]
\begin{center}
\def\layersep{1.2 cm}
	\begin{tikzpicture}[shorten >=1pt,->,draw=black!50, node distance=\layersep]
    \tikzstyle{every pin edge}=[<-,shorten <=1pt]
    \tikzstyle{neuron}=[circle,fill=black!25,minimum size=20pt,inner sep=0pt]
    \tikzstyle{input neuron}=[neuron, fill=blue!60];
    \tikzstyle{output neuron}=[neuron, fill=red!60];
    \tikzstyle{hidden neuron}=[neuron, fill=green!60];
    \tikzstyle{annot} = [text width=4em, text centered]

    \foreach \name / \y in {0, 1}
        \node[input neuron] (I-\name) at (0,-\y) {};

            
    \foreach \name / \y in {0,1}
        \path
            node[hidden neuron] (H1-\name) at (\layersep, -\y ) {};
     
     \foreach \name / \y in {0,1}
      \path
            node[hidden neuron] (H2-\name) at (2*\layersep,-\y) {};
            
      \foreach \name / \y in {0,1}
      \path
            node[hidden neuron] (H3-\name) at (3*\layersep,-\y) {};
            
      \foreach \name / \y in {0,1}
      \path
            node[hidden neuron] (H4-\name) at (4*\layersep,-\y) {};      
       
      \foreach \name / \y in {0}
      \path
            node[input neuron] (O-\name) at (5*\layersep,-0.5) {};       
     
     \node at (-0.8,-0.5){$\Big \updownarrow$};
     \node at (-1.2,-0.5){$d$};


    \foreach \source in {0,1}
        \foreach \dest in {0,1}
            \path (I-\source) edge[thick] (H1-\dest);
            
    \foreach \source in {0,1}
        \foreach \dest in {0,1}
            \path (H1-\source) edge[thick] (H2-\dest);   
     
     \foreach \source in {0,1}
        \foreach \dest in {0,1}
            \path (H2-\source) edge[thick] (H3-\dest); 
            
     \foreach \source in {0,1}
        \foreach \dest in {0,1}
            \path (H3-\source) edge[thick] (H4-\dest);  
            
     \foreach \source in {0,1}
        \foreach \dest in {0}
            \path (H4-\source) edge[thick] (O-\dest);     

            

\end{tikzpicture}
\caption{A five layer fully connected network with width $d=2$.} \label{appfig:FNN}
\end{center}
\end{figure}
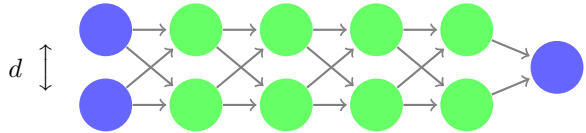

\paragraph{About the training algorithm.} We train the network with logistic loss using SGD with momentum. We run the algorithm for 10 epochs and we observe that after 5-8 epochs the loss on the training set saturates. 

\paragraph{Visualizing the decision boundaries.} After training, we learn a function~$f_{\mathcal{N}}$ based on the neural network. To visualize the decision boundary, we randomly sampled $2*10^3$ points in the ball $B(0,5)$ and use red point to represent positive predictions $\{ f_{\mathcal{N}} > 0 \}$ and blue points to represent negative predictions $\{f_{\mathcal{N}}\leq 0 \}$.

\section{Proof of Proposition 2.1}
We recall Proposition 2.1 in the main paper and prove it based on the result developed in \cite{lu2017expressive}. 
\begin{prop}
Let $f_{\mathcal{N}}: \R^d \rightarrow \R $ be the function defined by a fully connected network $\mathcal{N}$ with ReLU activation. Denote $P= \left \{x \in \R^d \,|\, f_{\mathcal{N}}(x) > 0 \right \}$ be the positive level set of $f_{\mathcal{N}}$. If each hidden layer of $\mathcal{N}$ has at most $d$ neurons, then 
\begin{equation*}
	\lambda(P)=0 \quad \text{ or } \quad  \lambda(P) = +\infty, \quad \text{where $\lambda$ denotes the Lebesgue measure.}
\end{equation*}
In other words, the level set of a ``narrow'' fully connected network is either unbounded or has measure null. 
\end{prop}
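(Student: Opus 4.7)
The plan is to combine the structural rigidity of width-bounded ReLU networks, already established in \cite{lu2017expressive}, with a short polyhedral volume argument. Since $f_{\mathcal{N}}$ is a composition of affine maps and ReLU nonlinearities, it is continuous and piecewise linear, so $P$ is open and decomposes into a finite union of open polyhedral cells, one per activation pattern of the hidden units. In particular $P$ is either empty (in which case $\lambda(P)=0$) or contains a nonempty open set (so $\lambda(P)>0$), and the task reduces to ruling out the intermediate regime $0<\lambda(P)<+\infty$.

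The main tool I would invoke is the key lemma of \cite{lu2017expressive}: when every hidden layer has width at most the input dimension $d$, any super-level set $\{f_{\mathcal{N}}>c\}$ of the realized function is either empty or unbounded. Specializing with $c=0$ yields the required dichotomy between emptiness and unboundedness of $P$. The intuition behind their lemma, which is proved by induction on depth, is that at width $d$ each hidden-layer map $x\mapsto \mathrm{ReLU}(W_{i}x+b_{i})$ either collapses a direction (when $W_{i}$ is rank-deficient, reducing the effective dimension of the image) or is essentially injective on its active region, and in neither case can such a map trap an unbounded preimage inside a bounded set.

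Granted that lemma, the remaining step is purely geometric. If $P$ is non-empty and unbounded, then among its finitely many open polyhedral cells at least one must itself be unbounded. Such a non-empty open convex polyhedron admits a nonzero direction $v$ in its recession cone and contains an open ball $B(x_{0},r)$ around an interior point $x_{0}$; a Minkowski-sum argument using convexity then shows that it contains the half-infinite open tube $\{x_{0}+tv+w : t\ge 0,\ \|w\|<r\}$, which has infinite $d$-dimensional Lebesgue measure. Hence $\lambda(P)=+\infty$, and the dichotomy is complete.

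The main obstacle here is really the key lemma of Lu et al.\ itself; since it is the explicit content of the cited reference, the proof I envision would simply invoke it and confine the original work to the piecewise-linear decomposition and the convex-polyhedral volume bound sketched above. A subtle point to be careful about is that "unbounded open set" does not a priori imply "infinite measure" for arbitrary sets, so the piecewise-linear structure (which forces one of finitely many convex polyhedral cells to carry the unboundedness) is essential in going from the Lu et al.\ statement to the measure statement in the proposition.
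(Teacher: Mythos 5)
There is a genuine gap, and it sits exactly where the real work of this proposition lies. The result you invoke as ``the key lemma of \cite{lu2017expressive}'' --- that every super-level set $\{f_{\mathcal{N}}>c\}$ of a width-$d$ network is empty or unbounded --- is not what that reference provides. Its Lemma~1, and the statement this paper actually uses, is an $\ell_1$ dichotomy: $\int_{\R^d}|f_{\mathcal{N}}(x)|\,dx$ equals $0$ or $+\infty$. A level-set dichotomy of the kind you describe is essentially the proposition itself (your own polyhedral step shows the two are equivalent for piecewise-linear $f$), so invoking it without proof makes the argument circular, or at least rests it on a citation that does not contain the claimed statement. The entire difficulty is bridging from the $\ell_1$ statement to a statement about $\lambda(P)$: knowing $\int|f_{\mathcal{N}}|=+\infty$ does not tell you $\lambda(P)=+\infty$, because the infinite mass could live entirely in the negative part of $f_{\mathcal{N}}$.

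The paper closes this gap with a construction you would need some substitute for: it stacks one more width-$d$ ReLU layer onto $\mathcal{N}$ to form $f_{\mathcal{N}^+}=\Relu(f_{\mathcal{N}})-\Relu(f_{\mathcal{N}}-1)$, which clips the output to $[0,1]$ and is strictly positive exactly on $P$. Applying the $\ell_1$ lemma to $\mathcal{N}^+$ (still a width-$d$ network) and using $0\le f_{\mathcal{N}^+}\le \mathds{1}_{f_{\mathcal{N}}>0}$ gives $\int_{\R^d} f_{\mathcal{N}^+}\le\lambda(P)$, hence the dichotomy, with no polyhedral geometry needed. Your convexity argument (a recession-cone direction plus an interior ball yields an infinite-volume tube) is sound as far as it goes and would correctly upgrade ``non-empty and unbounded'' to ``infinite measure'' for a piecewise-linear function with full-dimensional cells; but it operates downstream of a lemma you do not actually have. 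Note also that the paper handles $d=1$ separately by a direct monotonicity argument before invoking the lemma for $d\ge 2$.
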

\begin{proof}
When $d=1$, one hidden unit fully connected network $f_{\mathcal{N}}$ is always monotone. Thus the statement holds. 

When $d\geq 2$. We apply Lemma~1 of \cite{lu2017expressive}: if a fully connected network $\mathcal{N}$ with ReLU activation has at most $d$ neurons per hidden layer, then 
\begin{displaymath}
	\int_{\R^d} | f_{\mathcal{N}}(x)| dx  = 0 \,\, \text{ or } +\infty.
\end{displaymath}
It is clear that $\int_{\R^d} | f_{\mathcal{N}}(x)| dx  = 0$ implies $\lambda(P)=0$. Thus it remains to consider the case of infinity. However, we can not directly obtain $\lambda(P)=\infty$, since maybe the infinite $\ell_1$ integral is due to the negative part of $f_{\mathcal{N}}$. 

We are going to stack one more layer on top of $\mathcal{N}$ to build a new network $\mathcal{N}^+$ which thresholds its negative part. 
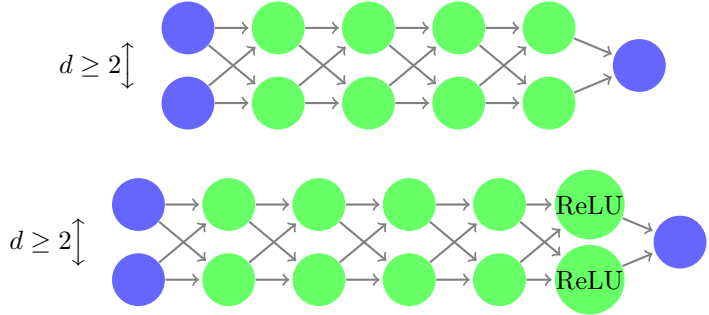
\begin{figure}[H]
\begin{center}
\def\layersep{1.2 cm}
	\begin{tikzpicture}[shorten >=1pt,->,draw=black!50, node distance=\layersep]
    \tikzstyle{every pin edge}=[<-,shorten <=1pt]
    \tikzstyle{neuron}=[circle,fill=black!25,minimum size=20pt,inner sep=0pt]
    \tikzstyle{input neuron}=[neuron, fill=blue!60];
    \tikzstyle{output neuron}=[neuron, fill=red!60];
    \tikzstyle{hidden neuron}=[neuron, fill=green!60];
    \tikzstyle{annot} = [text width=4em, text centered]

    \foreach \name / \y in {0, 1}
        \node[input neuron] (I-\name) at (0,-\y) {};

            
    \foreach \name / \y in {0,1}
        \path
            node[hidden neuron] (H1-\name) at (\layersep, -\y ) {};
     
     \foreach \name / \y in {0,1}
      \path
            node[hidden neuron] (H2-\name) at (2*\layersep,-\y) {};
            
      \foreach \name / \y in {0,1}
      \path
            node[hidden neuron] (H3-\name) at (3*\layersep,-\y) {};
            
      \foreach \name / \y in {0,1}
      \path
            node[hidden neuron] (H4-\name) at (4*\layersep,-\y) {};      
       
      \foreach \name / \y in {0}
      \path
            node[input neuron] (O-\name) at (5*\layersep,-0.5) {};       
     
     \node at (-0.8,-0.5){$\Big \updownarrow$};
     \node at (-1.3,-0.5){$d \geq 2$};


    \foreach \source in {0,1}
        \foreach \dest in {0,1}
            \path (I-\source) edge[thick] (H1-\dest);
            
    \foreach \source in {0,1}
        \foreach \dest in {0,1}
            \path (H1-\source) edge[thick] (H2-\dest);   
     
     \foreach \source in {0,1}
        \foreach \dest in {0,1}
            \path (H2-\source) edge[thick] (H3-\dest); 
            
     \foreach \source in {0,1}
        \foreach \dest in {0,1}
            \path (H3-\source) edge[thick] (H4-\dest);  
            
     \foreach \source in {0,1}
        \foreach \dest in {0}
            \path (H4-\source) edge[thick] (O-\dest);     
\end{tikzpicture}

\vspace*{0.5cm}

\begin{tikzpicture}[shorten >=1pt,->,draw=black!50, node distance=\layersep]
    \tikzstyle{every pin edge}=[<-,shorten <=1pt]
    \tikzstyle{neuron}=[circle,fill=black!25,minimum size=20pt,inner sep=0pt]
    \tikzstyle{input neuron}=[neuron, fill=blue!60];
    \tikzstyle{output neuron}=[neuron, fill=red!60];
    \tikzstyle{hidden neuron}=[neuron, fill=green!60];
    \tikzstyle{annot} = [text width=4em, text centered]

    \foreach \name / \y in {0, 1}
        \node[input neuron] (I-\name) at (0,-\y) {};

    \foreach \name / \y in {0,1}
        \path
            node[hidden neuron] (H1-\name) at (\layersep, -\y ) {};
     
     \foreach \name / \y in {0,1}
      \path
            node[hidden neuron] (H2-\name) at (2*\layersep,-\y) {};
            
      \foreach \name / \y in {0,1}
      \path
            node[hidden neuron] (H3-\name) at (3*\layersep,-\y) {};
            
      \foreach \name / \y in {0,1}
      \path
            node[hidden neuron] (H4-\name) at (4*\layersep,-\y) {};      
       
      \foreach \name / \y in {0,1}
      \path
            node[hidden neuron] (H5-\name) at (5*\layersep,-\y) {ReLU};       
            
      \foreach \name / \y in {0}
      \path
            node[input neuron] (O-\name) at (6*\layersep,-0.5) {};           
     
     \node at (-0.8,-0.5){$\Big \updownarrow$};
     \node at (-1.3,-0.5){$d \geq 2$};


    \foreach \source in {0,1}
        \foreach \dest in {0,1}
            \path (I-\source) edge[thick] (H1-\dest);
            
    \foreach \source in {0,1}
        \foreach \dest in {0,1}
            \path (H1-\source) edge[thick] (H2-\dest);   
     
     \foreach \source in {0,1}
        \foreach \dest in {0,1}
            \path (H2-\source) edge[thick] (H3-\dest); 
            
     \foreach \source in {0,1}
        \foreach \dest in {0,1}
            \path (H3-\source) edge[thick] (H4-\dest);  
            
     \foreach \source in {0,1}
        \foreach \dest in {0,1}
            \path (H4-\source) edge[thick] (H5-\dest);  
            
      \foreach \source in {0,1}
        \foreach \dest in {0}
            \path (H5-\source) edge[thick] (O-\dest);     
\end{tikzpicture}
\caption{Extending $\mathcal{N}$ to $\mathcal{N}^+$.} \label{appfig:N+}
\end{center}
\end{figure}
More precisely, we take the exact same coefficients as $\mathcal{N}$ and duplicate the last linear transformation into two ReLU activation functions such that 
\begin{equation*}
	f_{\mathcal{N}^+} = \Relu(f_{\mathcal{N}}) - \Relu(f_{\mathcal{N}}-1) 
	= 
	\begin{cases*}
	1  &  if $f_{\mathcal{N}} \geq 1$; \\
	f_{\mathcal{N}} &  if $f_{\mathcal{N}} \in (0,1]$;\\
	0 &   if $f_{\mathcal{N}} \leq 0$.
	\end{cases*}
\end{equation*}
Since $\mathcal{N}^+$ is also a fully connected network with at most $d$ neurons per hidden layer, the lemma~1 of \cite{lu2017expressive} also applies to $\mathcal{N}^+$. Therefore, 
\begin{equation*}	
	\int_{\R^d} | f_{\mathcal{N}^+}(x)| dx = \int_{\R^d}  f_{\mathcal{N}^+}(x) dx = 0 \,\, \text{ or } +\infty
\end{equation*}
Again the case when it is zero directly implies $\lambda(P)=0$. Moreover, $f_{\mathcal{N}^+}$ is upper bounded by one, which yields  
\begin{equation*}	
	\int_{\R^d}  f_{\mathcal{N}^+}(x) dx \leq \int_{\R^d} \mathds{1}_{f_{\mathcal{N}}>0} dx = \lambda(P).
\end{equation*}
Therefore, $\int_{\R^d} | f_{\mathcal{N}^+}(x)| dx = \infty$ implies $\lambda(P)=\infty$, which concludes the proof. 
\end{proof}

\end{document}